\documentclass[10pt]{article} 
\usepackage[preprint]{tmlr}

\usepackage{lipsum}  

\usepackage{hyperref}
\usepackage{url}
\usepackage{xcolor}
\usepackage{multirow} 
\usepackage{array}  

\usepackage{graphicx}  
\usepackage{listings}
\usepackage{amssymb} 
\usepackage{mathtools}
\usepackage[acronym]{glossaries}
\usepackage{siunitx,booktabs}
\usepackage[export]{adjustbox}
\usepackage[noadjust]{cite}
\usepackage{mathtools}
\usepackage{wrapfig}
\usepackage[capitalise]{cleveref}

\usepackage{pifont}

\usepackage{amsthm}

\usepackage[ruled,vlined,linesnumbered]{algorithm2e}
\setlength{\algomargin}{1.5em}

\SetCommentSty{mycommfont}
\SetKwComment{Comment}{$\triangleright$\ }{}

\crefname{assumption}{assumption}{assumptions}
\Crefname{algocf}{Algorithm}{Algorithms}

\newtheorem{theorem}{Theorem}

\newtheorem{lemma}{Lemma}
\newtheorem{definition}{Definition}

\newtheorem*{remark}{Remark}



\makeatletter
\newcommand\notsotiny{\@setfontsize\notsotiny\@vipt\@viipt}
\makeatother
\makeatletter
\newcommand{\removelatexerror}{\let\@latex@error\@gobble}
\makeatother


\usepackage{amsmath,amsfonts,bm}









\def\eqref#1{equation~\ref{#1}}









\def\1{\bm{1}}

\def\eps{{\epsilon}}








\def\vmu{{\bm{\mu}}}

\def\vsigma{{\bm{\sigma}}}
\def\vtau{{\bm{\tau}}}
\def\vepsilon{{\bm{\epsilon}}}
\def\vmu{{\bm{\mu}}}
\def\va{{\bm{a}}}
\def\vb{{\bm{b}}}
\def\vc{{\bm{c}}}
\def\vd{{\bm{d}}}

\def\vg{{\bm{g}}}

\def\vi{{\bm{i}}}

\def\vm{{\bm{m}}}

\def\vq{{\bm{q}}}

\def\vs{{\bm{s}}}

\def\vu{{\bm{u}}}

\def\vw{{\bm{w}}}
\def\vx{{\bm{x}}}

\def\vz{{\bm{z}}}


\def\mA{{\bm{A}}}
\def\mB{{\bm{B}}}
\def\mC{{\bm{C}}}

\def\mI{{\bm{I}}}

\def\mQ{{\bm{Q}}}
\def\mR{{\bm{R}}}
\def\mS{{\bm{S}}}

\def\mU{{\bm{U}}}

\def\mX{{\bm{X}}}

\def\mZ{{\bm{Z}}}

\def\mSigma{{\bm{\Sigma}}}

\DeclareMathAlphabet{\mathsfit}{\encodingdefault}{\sfdefault}{m}{sl}
\SetMathAlphabet{\mathsfit}{bold}{\encodingdefault}{\sfdefault}{bx}{n}


\def\gG{{\mathcal{G}}}

\def\gN{{\mathcal{N}}}
\def\gO{{\mathcal{O}}}

\def\gS{{\mathcal{S}}}

\def\gZ{{\mathcal{Z}}}


\def\sF{{\mathbb{F}}}

\def\sN{{\mathbb{N}}}

\def\sR{{\mathbb{R}}}

\def\sU{{\mathbb{U}}}

\def\sX{{\mathbb{X}}}









\newcommand{\norm}[1]{\left\lVert#1\right\rVert}


\DeclareMathOperator*{\diag}{diag}

\title{Model Tensor Planning}


\author{
An T. Le$^{1}$,
Khai Nguyen$^{7}$,
Minh Nhat Vu$^{5,6}$,
Jo\~{a}o Carvalho$^{1}$,
Jan Peters$^{1,2,3,4}$\\
\email 
\texttt{\{an, joao, jan\}@robot-learning.de}\\
\addr
$^{1}$Intelligent Autonomous Systems, Department of Computer Science, Technical University of Darmstadt, Germany\\
$^{2}$Systems AI for Robot Learning, German Research Center for AI (DFKI), Germany\\
$^{3}$Hessian Center for Artificial Intelligence (hessian.AI), Germany\\
$^{4}$Centre for Cognitive Science, Technical University of Darmstadt, Germany\\
$^{5}$Automation \& Control Institute, TU Wien, Austria\\
$^{6}$Austrian Institute of Technology (AIT), Vienna, Austria \\
$^{7}$VinRobotics and VinUniversity, Vietnam
}



\begin{document}

\maketitle

\begin{abstract}
Sampling-based model predictive control (MPC) offers strong performance in nonlinear and contact-rich robotic tasks, yet often suffers from poor exploration due to locally greedy sampling schemes. We propose \emph{Model Tensor Planning} (MTP), a novel sampling-based MPC framework that introduces high-entropy control trajectory generation through structured tensor sampling. By sampling over randomized multipartite graphs and interpolating control trajectories with B-splines and Akima splines, MTP ensures smooth and globally diverse control candidates. We further propose a simple $\beta$-mixing strategy that blends local exploitative and global exploratory samples within the modified Cross-Entropy Method (CEM) update, balancing control refinement and exploration. Theoretically, we show that MTP achieves asymptotic path coverage and maximum entropy in the control trajectory space in the limit of infinite tensor depth and width.

Our implementation is fully vectorized using JAX and compatible with MuJoCo XLA, supporting \emph{Just-in-time} (JIT) compilation and batched rollouts for real-time control with online domain randomization. Through experiments on various challenging robotic tasks, ranging from dexterous in-hand manipulation to humanoid locomotion, we demonstrate that MTP outperforms standard MPC and evolutionary strategy baselines in task success and control robustness. Design and sensitivity ablations confirm the effectiveness of MTP’s tensor sampling structure, spline interpolation choices, and mixing strategy. Altogether, MTP offers a scalable framework for robust exploration in model-based planning and control.
\end{abstract}

\section{Introduction}
\label{sec:intro}

Sampling-based Model Predictive Control (MPC)~\citep{mayne2014model, lorenzen2017stochastic, williams2017model} has emerged as a powerful framework for controlling nonlinear and contact-rich systems. Unlike gradient-based or linearization approaches, sampling-based MPC is model-agnostic and does not require differentiable dynamics, making it well-suited for high-dimensional, complex systems such as legged robots~\citep{alvarez2024real} and dexterous manipulators~\citep{li2024drop}. Moreover, its inherent parallelism enables efficient deployment on modern hardware (e.g., GPUs), allowing for high-throughput simulation and online domain randomization in real-time control pipelines~\citep{pezzato2025sampling}.

Despite these advantages, a fundamental limitation remains: sampling-based MPC is typically local in its search behavior. Most methods perturb a nominal trajectory or refine the current best samples, which makes them susceptible to local minima and unable to consistently discover globally optimal solutions~\citep{xue2024full}. While the Cross-Entropy Method (CEM)~\citep{de2005tutorial} has shown promise in high-dimensional control and sparse-reward settings~\citep{pinneri2021sample}, it still suffers from mode collapse when sampling locally~\citep{zhang2022simple}, leading to suboptimal behaviors. The curse of dimensionality exacerbates this issue, as the number of samples required to explore control spaces grows exponentially with the planning horizon and control dimension, posing a bottleneck if compute or memory is limited. These challenges motivate the need for a more effective, high-entropy sampling mechanism for control generation.

Evolutionary Strategies (ES)~\citep{hansen2016cma, wierstra2014natural, salimans2017evolution} have also been applied in sampling-based MPC settings to improve sampling exploration. While they improve over purely local strategies in some tasks, our experiments (cf.~\cref{subsec:motivating}) reveal that ES still fails to systematically explore multimodal control landscapes, often yielding inconsistent performance on tasks requiring long-term coordinated actions.

In this work, we introduce \textit{Model Tensor Planning} (MTP), a novel sampling-based MPC framework that enables globally exploratory control generation through structured tensor sampling. MTP reformulates control sampling as tensor operations over randomized multipartite graphs, enabling efficient generation of diverse control sequences with high entropy. To balance exploration and exploitation, we propose a simple yet effective $\beta$-mixing mechanism that combines globally exploratory samples with locally exploitative refinements. We also provide a theoretical analysis under bounded-variation assumptions, showing that our sampling scheme achieves asymptotic path coverage, approximating maximum entropy in trajectory space.

MTP is designed with real-time applicability with matrix-based formulation, which is compatible with \textit{Just-in-time} (JIT) compilation and vectorized mapping (e.g., via JAX \texttt{vmap}~\citep{jax2018github}), allowing high-throughput sampling, batch rollout evaluation, and online domain randomization on modern simulators. Our main contributions are as follows:
\begin{itemize}
    \item We propose \textit{tensor sampling}, a novel structured sampling strategy for control generation, and provide theoretical justification via asymptotic path coverage.
    \item We introduce a simple $\beta$-mixing mechanism that effectively balances exploration and exploitation by blending high-entropy and local samples within the modified CEM update rule.
    \item We demonstrate that MTP is highly compatible with modern vectorized simulators, enabling efficient batch rollout evaluation and robust real-time control in high-dimensional, contact-rich environments.
\end{itemize}

\section{Preliminary}
\label{sec:prelim}

\textbf{Notations and Assumptions.} 
We consider the problem of sampling-based MPC. Given a dynamics model $\dot{\vx} = f(\vx, \vu)$, we consider the path sampling problem in the control space $\sU \subset \sR^n$ with the control $\vu \in \sU$ having $n$-dimensions at the current system state $\vx \in \sX \subset \sR^d$.
Typically, a batch of control trajectories is sampled, rolled out through the dynamics model, and evaluated using a cost function.
Let a control path be $u: [0, 1] \to \vu,\,\vu(t) \in \sU$, we can define the path arc length as
\begin{equation} \label{eq:path_tv}
    \textrm{TV}(u) = \sup_{M \in \sN^{+},0=t_1,\ldots,t_{M}=1} \textstyle\sum_{i=1}^{M-1} \norm{\vu(t_{i+1}) - \vu(t_{i})}.
\end{equation}
We define $\sF$ as the set of all control paths that are uniformly continuous with bounded variation ${TV(u) < \infty, \, u \in \sF}$.
This assumption is common in many control settings, where the control trajectories are bounded in time and control space (i.e., both time and control spaces are compact). 
Throughout this paper, we narrate the preliminary and the tensor sampling method in matrix definitions, discretizing continuous paths with equal time intervals.

\begin{figure}[t]
\vspace{-0.3cm}
\centering
  \begin{minipage}[b]{0.29\linewidth}
  \centering
    \includegraphics[width=\linewidth, height=5.2cm]{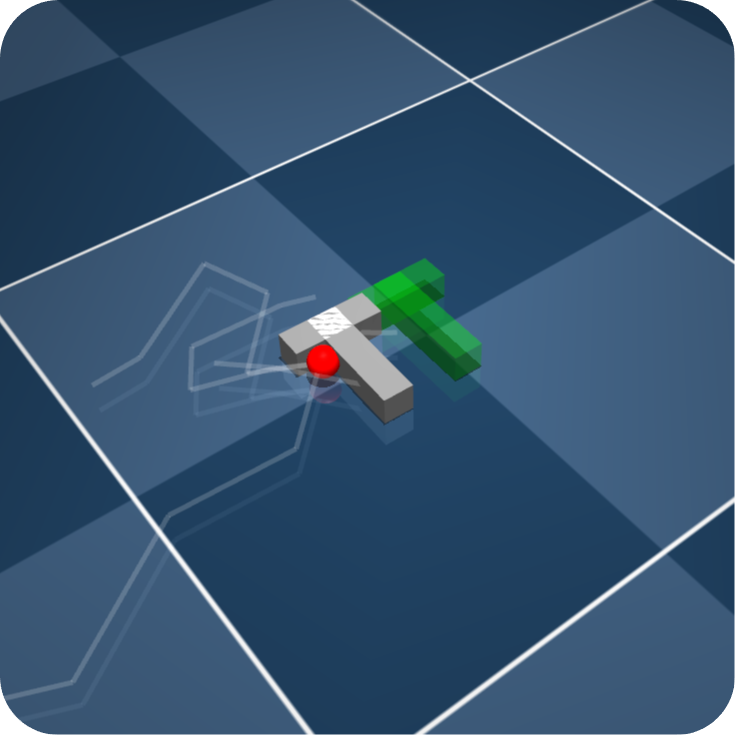}
    {\footnotesize PushT}
  \end{minipage}
  \begin{minipage}[b]{0.70\linewidth}
  \centering
    \includegraphics[width=\linewidth, height=5.6cm]{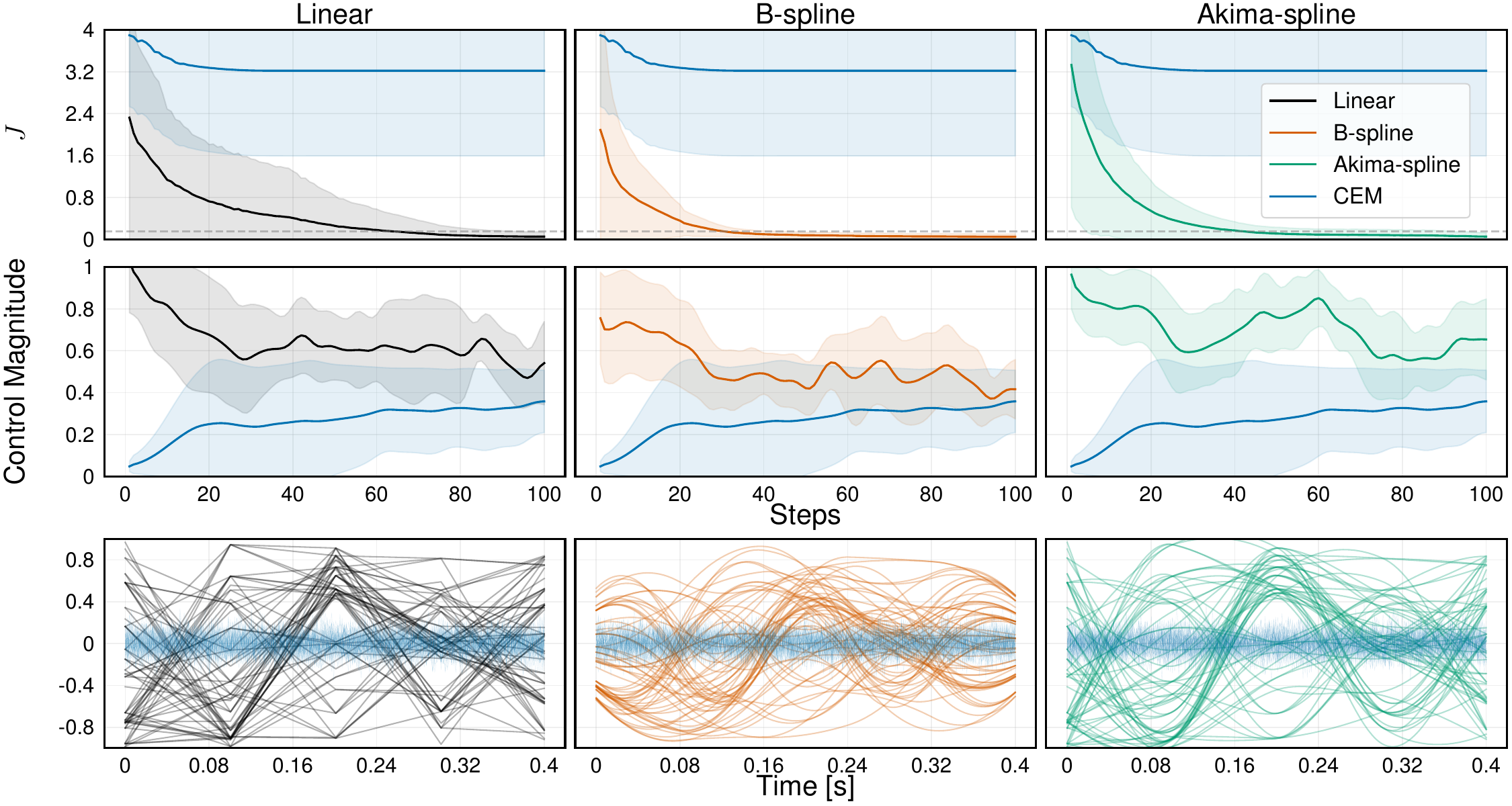}
  \end{minipage}
  
  \caption{Comparison of MTP interpolation methods versus CEM on \texttt{PushT} environment.
  The cost of \texttt{PushT} is the sum of the position and orientation error to the green target (without the guiding contact cost), and the initial T pose is randomized.
  The first row depicts the cost convergence over $10$ seeds. 
  In most seeds, CEM struggles to push the object due to the lack of exploration (e.g., mode collapsing), while MTP variants always find the correct contact point to achieve the task. 
  Note that the control magnitude of MTP is high due to the global explorative samples (see second \& third rows), compared to the white noise samples (blue). 
  B-spline helps regulate the control magnitude due to its barycentric weightings, while retaining exploration behaviors. 
  The last row illustrates the control trajectories between $64$ tensor samples and $64$ white noise trajectories. Experiment videos are publicly available at \url{https://sites.google.com/view/tensor-sampling/}.}
\label{fig:pusht}
\end{figure}

\subsection{Cross-Entropy Method for Sampling-based MPC}
\label{subsec:cem}

Consider a discretized dynamical system $\vx_{t+1} = f(\vx_t, \vu_t), \, t = 0, \dots, T-1$ with horizon $T$,
where $\vx_t \in \sR^{d}, \vu_t \in \sR^{n}$ denotes the state the control at time step $t$. Given the state cost function $c(\vx, \vu)$ and the terminal cost $c_T(\vx)$, the objective is to minimize a cumulative cost function
\begin{equation} \label{eq:running_cost}
J(\vtau, \mU) = \sum_{t=0}^{T-1} c(\vx_t, \vu_t) + c_{T}(\vx_{T}), \textrm{ s.t. } \vx_{t+1} = f(\vx_t, \vu_t)
\end{equation}
where the state trajectory $\vtau = [\vx_0, \ldots, \vx_{T}] \in \sR^{(T+1) \times d}$, and control trajectory $\mU = [\vu_0, \ldots, \vu_{T-1}] \in \sR^{T \times n}$ are the dynamics rollout, $c(\vx_t, \vu_t)$ is the immediate cost at each time step, and $c_{T}(\vx_{T})$ is the terminal cost.

CEM optimizes the control sequence $\mU$ iteratively by approximating the optimal control distribution using a parametric probability distribution, typically Gaussian. Initially, the control inputs are sampled from an initial distribution parameterized by mean $\vmu \in \sR^{T \times n}$ and standard deviation $\vsigma \in \sR^{T \times n}$. At each iteration, CEM performs the following steps iteratively:

\textbf{Sampling.} Draw $B$ candidate control sequences from the current Gaussian distribution:
\begin{equation}
\mU^{(k)} \sim \mathcal{N}(\vmu, \diag(\vsigma^2)), \quad k = 1, \dots, B.
\end{equation}

\textbf{Evaluation.} Rollout $\vtau^{(k)}$ from the dynamics model and compute the cost $J(\vtau^{(k)}, \mU^{(k)})$ for each sampled control sequence by simulating the system dynamics.

\textbf{Elite Selection.} Choose the top-$E \leq B$ elite candidates of control sequences that have the lowest cost, forming an elite set $\mathcal{E} = \{\mU^{(k)}\}_{k=1}^{E}$.

\textbf{Distribution Update.} Update the parameters $(\vmu, \vsigma)$ based on elite samples:
\begin{equation}
\vmu_{\textrm{new}} = \frac{1}{E}\sum_{\mU \in \mathcal{E}} \mU, \quad \vsigma_{\textrm{new}}^2 = \frac{1}{E - 1}\sum_{\mU \in \mathcal{E}} (\mU - \vmu_{\textrm{new}})^2.
\end{equation}
An exponential smoothing update rule can be used for stability:
\begin{equation}
\vmu \leftarrow \alpha \vmu + (1 - \alpha) \vmu_{\textrm{new}}, \quad \vsigma \leftarrow \alpha \vsigma + (1 - \alpha) \vsigma_{\textrm{new}},
\end{equation}
where $\alpha \in [0,1)$ is a smoothing factor.

\textbf{Termination Criterion.} The iterative process continues until a convergence criterion is met or a maximum number of iterations is reached. The optimal control sequence is approximated by the final mean $\vmu$ of the Gaussian distribution.

\subsection{Spline-based Controls}
\label{subsec:spline}

Splines provide a powerful representation for trajectory generation in MPC due to their flexibility, continuity properties, and ease of parameterization~\citep{jankowski2023vp, carvalho2025motion}.
In this work, we focus on spline-parametrization of control trajectories.
Spline-based trajectories ensure smooth and feasible control inputs that satisfy constraints and objectives inherent to MPC frameworks.

A spline is defined as a piecewise polynomial function $\vu(t) : [0, T] \to \sU,$ which is polynomial within intervals divided by knots $t_1, \dots, t_M$, with continuity conditions enforced at these knots. In particular,
\begin{itemize}
\item \textbf{Knots.} A knot $t_i \in [0, T]$ is a time point where polynomial pieces join. We have a non-decreasing sequence of knots $0 = t_1 \leq \ldots \leq t_M = T$, which partition the time interval $[0, T]$ into pieces $[t_i, t_{i+1}]$ so that the path is polynomial in each piece. We may often have double or triple knots, meaning that several consecutive knots $t_i = t_{i+1}$ are equal, especially at the beginning and end, as this can ensure boundary conditions for zero higher-order derivatives.
\item \textbf{Waypoints.} A waypoint $\vu(t) \in \sU$ is a point on the path, typically corresponding to $\vu(t_i)$.
\item \textbf{Control Points.} A set of control points $\gZ = \{\vz_i | \vz_i \in \sR^{n}\}_{i=1}^K $ parametrizes the spline via basis functions.
\end{itemize}

\textbf{B-Spline Parameterization.} In B-splines~\citep{deboor1973bsplines}, the path $\vu$ is expressed as a
linear combination of control points $\vz_i \in \gZ$
\begin{equation}\label{eq:bspline}
    \vu(t) = \sum_{i=1}^K B_{i,p}(t)\vz_i, \quad \textrm{ s.t. } \sum_{i=1}^K B_{i,p}(t) = 1,
\end{equation}
where $B_{i,p}: {\mathbb{R}}\to {\mathbb{R}}$ maps the time $t$ to the weighting of the $i$\textsuperscript{th} control point, depicting the $i$\textsuperscript{th} control point weight for blending (i.e., as with a probability
distribution over $i$). Hence, the control waypoint $\vu(t)$ is always in the convex hull of control points. The B-spline functions $B_{i,p}(t)$ are fully specified by a non-decreasing series of time knots $0 = t_1 \leq \ldots \leq t_M = T$ and the integer polynomial degree $p \in \{0, 1, \ldots\}$ by
\begin{align}\label{eq:bsplines_recursive}
B_{i,0}(t) &= [t_i \leq t \leq t_{i + 1}],\quad\text{for $1\leq i \leq M-1$}, \nonumber\\
B_{i,p}(t) &= \frac{t-t_i}{t_{i+p}-t_i}B_{i,p-1}(t)
 +  \frac{t_{i+p+1}-t}{t_{i+p+1}-t_{i+1}}B_{i+1,p-1}(t),\quad\text{for $1\le i \le M-p-1$}.
\end{align}
$B_{i,0}$ are binary indicators of $t \in [t_i, t_{i+1}]$ with $1\leq i \leq M-1$. The 1st-degree B-spline functions $B_{i,1}$ have support in $t \in [t_i, t_{i+2}]$ with $1\leq i \leq M-2$, such that $\sum_{i=1}^{M-2} B_{i,1}(t) = 1$ holds. In general, degree $p$ B-spline functions $B_{i,p}$ have support in $t \in [t_i, t_{i+p+1}]$ with $1\leq i \leq M-p-1$. We need $K = M-p-1$ control points $\vz_{1:K}$, which ensures the normalization property $\sum_{i=1}^K B_{i,p}(t) = 1$ for every degree.

\textbf{Akima-Spline Parameterization.} The Akima spline~\citep{akima1974method} is a piecewise cubic interpolation method that exhibits $C^1$ smoothness by using local points to construct the spline, avoiding oscillations or overshooting in other interpolation methods, such as B-splines. In other words, an Akima spline is a piecewise cubic spline constructed to pass through control points with $C^1$ smoothness. Given the control point set $\gZ$ with $K = M$, the Akima spline constructs a piecewise cubic polynomial $ u(t) $ for each interval \( [t_i, t_{i+1}] \) 
\begin{equation} \label{eq:poly_akima}
    \vu_i(t) = \vd_i (t - t_i)^3 + \vc_i (t - t_i)^2 + \vb_i (t - t_i) + \va_i,
\end{equation}
where the coefficients $\va_i, \vb_i, \vc_i, \vd_i \in \sU$ are determined from the conditions of smoothness and interpolation. 
Let $ \vm_i = (\vz_{i + 1} - \vz_{i}) / (t_{i+1} - t_i)$, the spline slope is computed as
\begin{equation}
\vs_i = \frac{|\vm_{i+1} - \vm_i | \vm_{i-1} + |\vm_{i-1} - \vm_{i-2} | \vm_{i}}{|\vm_{i+1} - \vm_i| + |\vm_{i-1} - \vm_{i-2}|}.
\end{equation}
The spline slopes for the first two points at both ends are $\vs_1 = \vm_1, \vs_2 = (\vm_1 + \vm_2) / 2, \vs_{M-1} = (\vm_{M-1} + \vm_{M-2}) / 2, \vs_M = \vm_{M-1}$. 
Then, the polynomial coefficients are uniquely defined 
\begin{equation} \label{eq:akima_coeff}
\va_i = \vu_i, \quad \vb_i = \vs_i, \quad \vc_i = (3\vm_i - 2\vs_i - \vs_{i+1}) / (t_{i+1} - t_i), \quad \vd_i = (\vs_i + \vs_{i+1} - 2\vm_{i}) / (t_{i+1} - t_i)^2.
\end{equation}

\textbf{Motivation.} Spline representation provides several benefits. (i) It ensures smooth control trajectories \citep{watson2023inferring} with guaranteed continuity of positions, velocities, and accelerations under mild assumptions of the dynamics. (ii) Spline simplifies complex trajectories through a few control points and efficiently incorporates constraints and boundary conditions, enabling efficient learning and planning by dimension reduction \citep{carvalho2025motion}. (iii) It enables easy numerical optimization thanks to differentiable and convex representations.
\section{Method}
\label{sec:method}

We first propose \textit{tensor sampling} -- a batch control path sampler having high explorative behavior, and investigate its path coverage property over the compact control space. Then, we incorporate the tensor sampling with the modified CEM update rule, balancing exploration and exploitation in cost evaluation, forming an overall vectorized sampling-based MPC algorithm. 

\subsection{Tensor Sampling}

\begin{figure}[t]
    \centering
    \includegraphics[width=\linewidth]{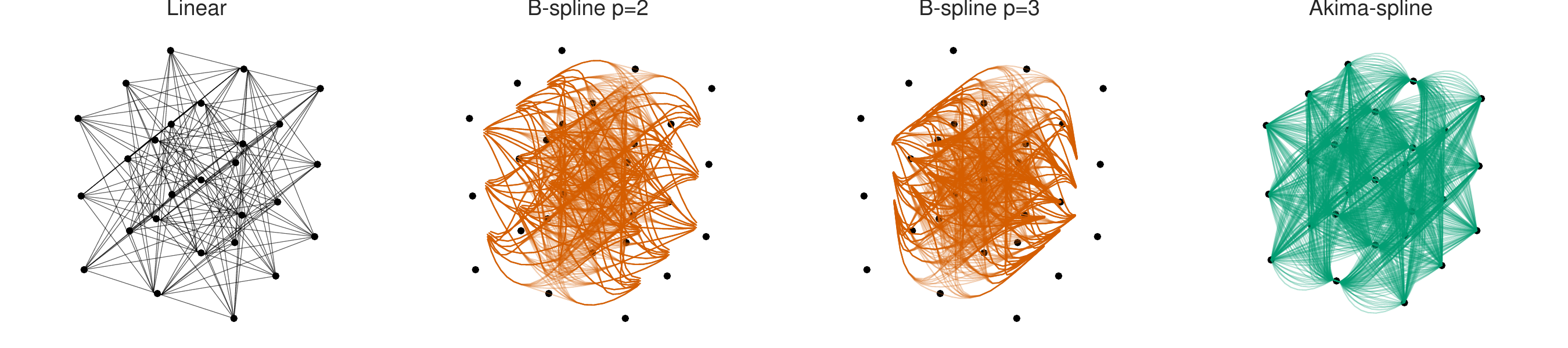}
    \vspace{-0.7cm}
    \caption{Illustration of different tensor interpolations on an evenly spaced graph with $M=3, N=9$. With a higher B-spline degree, the control trajectories exhibit more smoothness and conservative behavior, while Akima-spline aggressively and smoothly tracks control-waypoints. Note that we do not consider boundary conditions for B-spline interpolation in tensor sampling.}
    \label{fig:spline_tensor}
    \vspace{-0.3cm}
\end{figure}

Inspired by~\citep{le2024global}, we utilize the random multipartite graph as a tensor discretization structure to approximate global path sampling. 

\begin{definition}[Random Multipartite Graph Control Discretization] \label{def:graph}
    Consider a graph $\gG(M, N) = (V, W)$ on control space $\sU$, the node set $V = \{L_i\}_{i=1}^M$ is a set of $M$ layers. Each layer $L_i = \{\vu_j \in \sU \mid \vu_j \sim \textrm{Uniform}(\sU)\}_{j=1}^N$ contains $N$ control-waypoints sampled from a uniform distribution over control space (i.e., bounded by control limits). The edge set $W$ is defined by the union of (forward) pair-wise connections between layers
    \begin{equation}
        W = \{ (\vu_i, \vu_{i+1}) \mid \forall \vu_i \in L_i,\ \vu_{i+1} \in L_{i + 1},\, 1 \leq i < M \}, \nonumber
    \end{equation}
    leading to a complete $M$-partite directed graph.
\end{definition}

\textbf{Sampling from $\gG(M, N)$}. The graph nodes are represented as the control-waypoint tensor for all layers $\mZ \in \sR^{M \times N \times n}$, within the control limits. To sample a batch of $B \in \sN^+$ control paths with a horizon $T$, we subsample with replacement $\mC \in \sR^{B \times M \times n}$ from the set of all combinatorial paths in $\gG$ (cf.~\cref{alg:sampling_paths}) and further interpolate $\mC$ into control trajectories $\mU \in \sR^{B \times T \times n}$ with different smooth structures, e.g., using \cref{eq:bspline} or \cref{eq:poly_akima}.
Sampling with replacement is cheap $\gO(MN)$, while sampling without replacement is $\gO(N^M)$. 
Sampling without replacement adds overheads due to re-indexing or tree-traversing to get batch of sequence indices (depending on the low-level implementation of sampling) but offers better diversity. In practice, we use sampling with replacement, which does not really affect diversity (see last row in~\cref{fig:pusht}), is faster and scales well with JAX vectorized operations. To see this, we have $N^M$ combinatorial paths in $\gG(M, N)$, each path in $\gG(M, N)$ has uniform $1 / N^M$ mass, and the probability of sampling the same paths is small.

\begin{figure}[hb]
\begin{algorithm}[H]
\caption{Sampling Paths From $\gG(M, N)$}
\label{alg:sampling_paths}
\DontPrintSemicolon 
\KwIn{Control waypoints $\mZ \in \sR^{M \times N \times n}$, number of paths $B$}
\KwOut{Sampled control-waypoints $\mC \in \sR^{B \times M \times n}$}
$\mI \gets \texttt{randint}((B, M), 1, N)$. \tcp{batch sample with replacement from ${1,\ldots, N}$ with shape $(B, M)$}
$\mC \gets \texttt{parse\_index}(\mZ, \mI).$ \tcp{extract waypoints from sampled indices into $\mC \in \sR^{B \times M \times n}$}
\end{algorithm}%
\vspace{-0.3cm}
\end{figure}
\textbf{Control Path Interpolation}. Straight-line interpolation can be realized straightforwardly by simply probing an $H = \lfloor T / M \rfloor$ number of equidistant points between layers, forming the linear coverage trajectories $\mU \in \sR^{B \times T \times n}$. However, there exist discontinuities at control waypoints using linear interpolation. Hence, we motivate spline tensor interpolations for sampling smooth control paths (cf.~\cref{fig:spline_tensor}).
\begin{definition}[B-spline Control Trajectories] \label{def:bspline_graph}
    Given two time sequences $t_i = i / (M + p + 1),\,i \in \{0, \ldots,M + p\}$ and $t_j = j / T,\,j \in \{0, \ldots,T-1\}$, the B-Spline matrix $\mB_p \in \sR^{M \times T}$ can be constructed recursive following~\cref{eq:bsplines_recursive}, with index $i, j$ corresponding to the element $\mB_{i,p}(t_j)$. Then, the control trajectories can be interpolated by performing \texttt{einsum} on the $M$ dimension of $\mB_p\in \sR^{M \times T}, \mC\in\sR^{B \times M \times n}$, resulting $\mU \in \sR^{B \times T \times n}$.
\end{definition}
B-spline control trajectories exhibit conservative behavior as they are strictly inside the control point convex hull. Alternatively, they can be forced to pass through all control points by adding a multiplicity of $p$ per knot~\citep{deboor1973bsplines}. However, this method wastes computation by increasing the B-spline matrix size to $Mp \times N$, and still cannot avoid the overshooting problem. 
Thus, we further propose the Akima-spline control interpolation. 
\begin{definition}[Akima-spline Control Trajectories~\citep{le2024global}] \label{def:akima_graph}
    Given the time sequences $t_i = i / M,\,i \in \{0, \ldots,M-1\}$ representing the $M$ layer time slices and the control points $\mC\in\sR^{B \times M \times n}$, the Akima polynomial parameters $\mA \in \sR^{B \times (M-1) \times 4 \times n}$ can be computed following~\cref{eq:akima_coeff} in batch. Then, given the time sequence $t_j = j / T,\,j \in \{0, \ldots,T-1\}$, the control trajectories $\mU \in \sR^{B \times T \times n}$ are interpolated following polynomial interpolation~\cref{eq:poly_akima}.
\end{definition}
In the next section, we investigate whether the path distribution support of tensor sampling approximates the support of all possible paths in the control space, with linear interpolation. B-spline and Akima-spline variants are deferred for future work.

\subsection{Path Coverage Guarantee}

We analyze the path coverage property of $\gG(M, N)$ for sampling control paths with linear interpolation. In particular, we investigate that any feasible path in the control space can be approximated arbitrarily well by a path in the random multipartite graph $\gG(M, N)$ as the number of layers $M$ and the number of waypoints per layer $N$ approach infinity.

\begin{theorem}[Asymptotic Path Coverage] \label{thm:coverage}
Let $u \in \sF$ be any control path and $\gG(M, N)$ be a random multipartite graph with $M$ layers and $N$ uniform samples per layer (cf.~\cref{def:graph}). Assuming a time sequence (i.e., knots) $0 = t_1 < t_2 < \ldots < t_M = 1$ with equal intervals, associating with layers $L_1,\ldots,L_M \in \gG(M, N)$ respectively, then
\[
\lim_{M,N \to \infty} \min_{g \in \gG(M,N)} \|u - g\|_\infty = 0.
\]
\end{theorem}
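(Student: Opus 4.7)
The plan is to prove this by a two-stage sandwich argument: first approximate the target path $u$ by its piecewise-linear interpolant through its values at the graph knots, then approximate those values by the closest random waypoints in each layer, and finally combine via triangle inequality. Concretely, I would fix $\eps > 0$ and proceed as follows.

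First I would exploit uniform continuity. Since $u \in \sF$ is uniformly continuous, there exists $\delta > 0$ such that $|s - t| < \delta$ implies $\|u(s) - u(t)\| < \eps/2$. Choose $M$ large enough that the mesh $1/(M-1) < \delta$, and let $\tilde{u}$ be the piecewise-linear interpolant through $u(t_1), \dots, u(t_M)$. For $t \in [t_i, t_{i+1}]$ with barycentric weight $\alpha$, write $\tilde{u}(t) = (1-\alpha) u(t_i) + \alpha u(t_{i+1})$ and bound
\[
\|u(t) - \tilde{u}(t)\| \le (1-\alpha)\|u(t)-u(t_i)\| + \alpha\|u(t)-u(t_{i+1})\| \le \eps/2,
\]
giving $\|u - \tilde{u}\|_\infty \le \eps/2$ deterministically once $M$ is sufficiently large.

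Second I would handle the random waypoint approximation. For each layer $L_i$, let $v_i \in L_i$ be the closest sample to $u(t_i)$, and let $g$ be the piecewise-linear graph path through $v_1, \dots, v_M$. Since $\sU$ is compact with nonempty interior and sampling is uniform, there is a constant $c_\sU > 0$ (depending only on $\sU$ and $n$) such that for every $x \in \sU$ and $\eta \in (0, \mathrm{diam}(\sU)]$ the volume satisfies $\mathrm{vol}(B_\eta(x) \cap \sU) \ge c_\sU \eta^n$. Hence $\Pr(\|v_i - u(t_i)\| > \eta) \le (1 - c_\sU \eta^n / \mathrm{vol}(\sU))^N$. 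A union bound across the $M$ layers gives
\[
\Pr\!\left(\max_i \|v_i - u(t_i)\| > \eta\right) \;\le\; M \bigl(1 - c_\sU \eta^n/\mathrm{vol}(\sU)\bigr)^N,
\]
which tends to zero whenever $N / \log M \to \infty$, and in particular when $M, N \to \infty$ jointly along any such regime. Setting $\eta = \eps/2$ and repeating the convex-combination bound from step one yields $\|\tilde{u} - g\|_\infty \le \eps/2$ on this high-probability event.

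Finally I would assemble: on the good event, the triangle inequality gives $\|u - g\|_\infty \le \|u - \tilde{u}\|_\infty + \|\tilde{u} - g\|_\infty \le \eps$, so $\min_{g \in \gG(M,N)} \|u - g\|_\infty \le \eps$ with probability tending to one, which is the desired limit (interpreted as convergence in probability, and almost surely under Borel–Cantelli when $N$ grows faster than $\log M$). The main obstacles I anticipate are twofold: (i) making the joint limit $M, N \to \infty$ rigorous — one must either specify the growth rate $N = \omega(\log M)$ or weaken the conclusion to an iterated limit $\lim_M \lim_N$; and (ii) the boundary-volume lower bound $\mathrm{vol}(B_\eta(x) \cap \sU) \ge c_\sU \eta^n$, which needs a mild regularity hypothesis on $\partial \sU$ (e.g., Lipschitz boundary, interior cone, or simply $\sU$ being a box of control limits, which is the standard setting here). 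Notably, bounded variation of $u$ is not required for this particular argument; uniform continuity alone suffices, so the $\mathrm{TV}(u) < \infty$ hypothesis is only needed for the path-entropy interpretation mentioned after the theorem.
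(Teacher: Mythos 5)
Your proposal is correct and follows essentially the same route as the paper's proof: piecewise-linear interpolation of $u$ at the knots via uniform continuity, a per-layer ball-hitting probability bound with a union bound over the $M$ layers, and a final triangle inequality. Your version is in fact more careful on two points the paper glosses over — the growth regime needed for the joint limit (the paper's bound $1 - Me^{-cN}$ also requires $N$ to outpace $\log M$, which the paper sidesteps by effectively taking the iterated limit $M$ first, then $N$) and the volume lower bound near $\partial\sU$ (the paper invokes openness of $\sU$ despite also assuming compactness) — and your observation that bounded variation is not actually needed is accurate.
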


In intuition, \cref{thm:coverage} states the support of path distribution $\gG(M, N)$ approximates $\sF$ and converges to $\sF$ as $M, N \to \infty$. Thus, sampling paths from $\gG(M, N)$ provides a tensorized mechanism to efficiently sample any possible path from $\sF$, which allows vectorized sampling.

\begin{remark}
As $M, N \to \infty$, for any control path $g \in \sF$, then $g \in \gG(M, N)$.
Hence, $\gG(M, N)$ represents all homotopy classes in the limit $M, N \to \infty$, and sampling paths from $\gG(M, N)$ approximate sampling from all possible paths.
\end{remark}

To quantify the exploration level of tensor sampling, one standard way is to investigate its path distribution entropy. In intuition, when $M, N \to \infty$, tensor sampling entropy also approaches infinity due to uniform sampling per layer, which is further discussed in~\cref{subsec:explore_vs_exploit}. Note that this theory investigation serves as a guiding principle, while practical settings trade off entropy with success rate and runtime feasibility. 

\textbf{Practical Settings.} We typically only set $M < T$. In principle, increasing $N$ should enable finer-grained exploration over the trajectory space. However,  we observe diminishing returns when $N$ increases while keeping the total number of sampled trajectories $B$ fixed (cf.~\cref{fig:sweep_mn}). Intuitively, the underlying graph becomes denser, and the number of explored paths remains constant, resulting in only marginal performance improvements. Therefore, we recommend choosing $N$ proportionally to $B$ and within the bounds of available GPU memory, to maintain computational efficiency without oversampling from a small sample size. 

\subsection{Algorithm}

Here, we present the overall algorithm combining tensor and local sampling with smooth structure options in~\cref{alg:mtp}. We propose a simple mixing mechanism with $\beta \in [0, 1]$ by concatenating explorative and exploitative samples, forming a control trajectory tensor $\mU$ (Line 4-8). We include the current nominal control for system stability at the fixed-point states (e.g., for tracking tasks)~\citep{howell2022predictive}.
Using simulators that allow for parallel runs~\citep{todorov2012mujoco, makoviychuk2021isaac}, rollout and cost evaluation can be efficiently vectorized (Line 9).

To tame the noise induced by tensor sampling, we modify the CEM update with \texttt{softmax} weighting on the elite set, for computing the new weighted control means and covariances similar to the MPPI update rule~\citep{williams2017model}. We observe that this greatly smoothens the update over timesteps (Line 11-13) (cf.~\cref{fig:softmax_ablation}). Finally, similar to~\citet{howell2022predictive}, we send the first control of the best candidate, since this control trajectory is evaluated in the simulator rather than the updated mean $\vmu$. Notice that we have fixed tensor shapes based on hyperparameters, for all subroutines of sampling, rolling out, and control distribution updates.~\cref{alg:mtp} can be JAX \texttt{jit} and \texttt{vmap} over a number of $R$ model perturbations $\{f_{j}(\vx, \vu)\}_{j=1}^R$, for efficient online domain randomization, while maintaining real-time control (cf.~\cref{subsec:additional_ablations}).

\begin{figure}[htp]
\vspace{-0.3cm}
\removelatexerror
\begin{algorithm}[H]
\caption{Model Tensor Planning}
\label{alg:mtp}
\DontPrintSemicolon 
\KwIn{Model $f(\vx, \vu)$, graph params $M,N$, num samples $B$, mixing rate $\beta \in [0, 1] $, planning horizon $T$. CEM params $\alpha \in [0, 1), \lambda > 0, \sigma_m > 0, E$, which are smooth and temperature scalar, minimum variance, elite number, respectively.}
Choose interpolation type \textbf{Linear}, or \textbf{B-spline} (\cref{def:bspline_graph}), or \textbf{Akima} (\cref{def:akima_graph}).\\
Init the nominal control $\vmu \in \sR^{T \times n}$ and variance $\diag(\vsigma^2), \vsigma \in \sR^{T \times n}$. \\ 
\While{Task is not complete}{
\tcp{tensor sampling}
Uniformly sample $\mQ \in \sR^{M \times N \times d}$ on control space $\sU$. \\ 
Sample control waypoints $\mC \in\sR^{P \times M \times n}$ with $P = \lfloor \beta B \rfloor$, using~\cref{alg:sampling_paths}. \\
Interpolate $\mC$ using with chosen interpolation method into control trajectories $\mU_{\gG} \in \sR^{P \times T \times n}$. \\ 
\tcp{local sampling}
Sample $B - P - 1$ local trajectories $\mU_{\text{Local}} \sim \gN(\vmu, \diag(\vsigma^2))$. \\
Stack $\mU_{\text{Local}}, \mU_{\gG}, \mu$ into $\mU \in \sR^{B \times T \times n}$ \\
\tcp{Update routine using vectorized simulator}
Batch rollout $\mX \in \sR^{B \times T \times d}$ from model $f(\vx, \vu)$, and evaluate cost matrix $\mS(\mX, \mU) \in \sR^{B \times T}$. \\
Sum cost $\vs = \sum_t\mS_{:, t} \in \sR^B$ and sort top-$E$ elite candidate indices $\vi_E$. \\
Select candidate $\mU \gets \texttt{parse\_index}(\mU, \vi_E)$ and compute candidate weights $\vw = \frac{\exp\left(-\frac{1}{\lambda}\vs[\vi_E] \right)}{\sum\exp\left(-\frac{1}{\lambda} \vs[\vi_E] \right)}$. \\
Compute new mean $\vmu' = \vw\mU$ and variance $\vsigma' = \max(\vw \sum_{\vu \in \mU} (\vu - \vmu')^2, \sigma_m)$.\\
Update $\vmu \gets \vmu' + \alpha(\vmu - \vmu')$ and $\vsigma \gets \vsigma' + \alpha(\vsigma - \vsigma')$.\\
\tcp{Send the best evaluated control $\vu^* \in \sR^n$}
$\vu^* \gets  \mU_{0,:}$
}
\end{algorithm}%
\vspace{-0.3cm}
\end{figure}

\section{Experiments}
\label{sec:exp}
In this section, we investigate our proposed approach with the following research questions/points:
\begin{itemize}
    \item How does MTP's performance with Akima/B-spline control variants compare to standard sampling-based MPC baselines, and strong-exploratory evolution strategies baselines?
    \item How does MTP's performance vary with the number of elites and mixing rate $\beta$  on interpolation methods (Linear, B-spline, Akima-spline)?
    \item How does MTP's performance vary with (i) mixing rate $\beta$ associating with levels of MTP exploration on complex environments, and (ii) MTP-Bspline degree versus planning performance.
\end{itemize}
We study the cumulative cost $J(\vtau, \mU)$ \cref{eq:running_cost} over the control timestep for each task.
For each experiment, we take the minimum cumulative cost in batch rollouts at each timestep, and plot the mean and standard deviation over $5$ seeds. Further ablations on sweeping graph parameters $M, N$, \texttt{softmax} weighting, and JAX planning performance benchmark are presented in~\cref{subsec:additional_ablations}.

\textbf{Practical Settings.} All algorithms and environments are implemented in MuJoCo XLA~\citep{todorov2012mujoco, kurtz2024hydrax}, to utilize the \texttt{jit} and \texttt{vmap} JAX operators for efficient vectorized sampling and rollout on multiple model instances. All experiment runs are \textit{sim-to-sim} evaluated (MuJoCo XLA to MuJoCo). In particular, we introduce some modeling errors in MuJoCo XLA.
Then, for online domain randomization, we randomize a set of $R$ models $\{f_{j}(\vx, \vu)\}_{j=1}^R$, then we perform batch sampling and rollout of $R \times B$ trajectories. Finally, the cost evaluation is averaged on the $R$ domain randomization dimension. Note that, for this paper, we deliberately design the task costs to be simple and set sufficiently short planning horizons to benchmark the exploratory capacity of algorithms. In practice, one may design dense guiding costs to achieve the tasks. Further task details are presented in~\cref{subsec:env_details}.

\textbf{Baselines.} 
For explorative baselines, we choose OpenAI-ES (with antithetic sampling)~\citep{salimans2017evolution}, which shows parallelization capability with a high number of workers in high-dimensional model-based RL settings.
Additionally, we choose the recently proposed Diffusion Evolution (DE)~\citep{zhang2024diffusion}, bridging the evolutionary mechanism with a diffusion process~\citep{ho2020denoising}, which demonstrates superior performances over classical baselines such as CMA-ES~\citep{hansen2016cma}.
Both are implemented in \texttt{evosax}~\citep{lange2023evosax}.
For methods that take into account local information (exploitation methods), we compare MTP with standard MPPI~\citep{williams2017model} and Predictive Sampling (PS)~\citep{howell2022predictive} to sanity check on task completion in sim-to-sim scenarios.

\subsection{Motivating Example}
\label{subsec:motivating}

Here, we provide an experimental analysis of the baselines' exploration capacity on \texttt{Navigation} environment (cf.~\cref{fig:particle}), where the point-mass agent is controlled by an axis-aligned $2$-dim velocity controller. We compare MTP-Bspline and MTP-Akima to evolutionary algorithms and standard MPC baselines, with maximum sampling noise settings (cf.~\cref{fig:particle}). In particular, given the control limits $[-1, 1]$ on x-y axes, we set the standard deviation $\sigma = 1$ for sampling noise of MPPI and PS, and population generation noise for OpenAI-ES and DE.
\begin{figure}[t]
\centering
  \begin{minipage}[b]{0.25\linewidth}
  \centering
    \includegraphics[width=\linewidth]{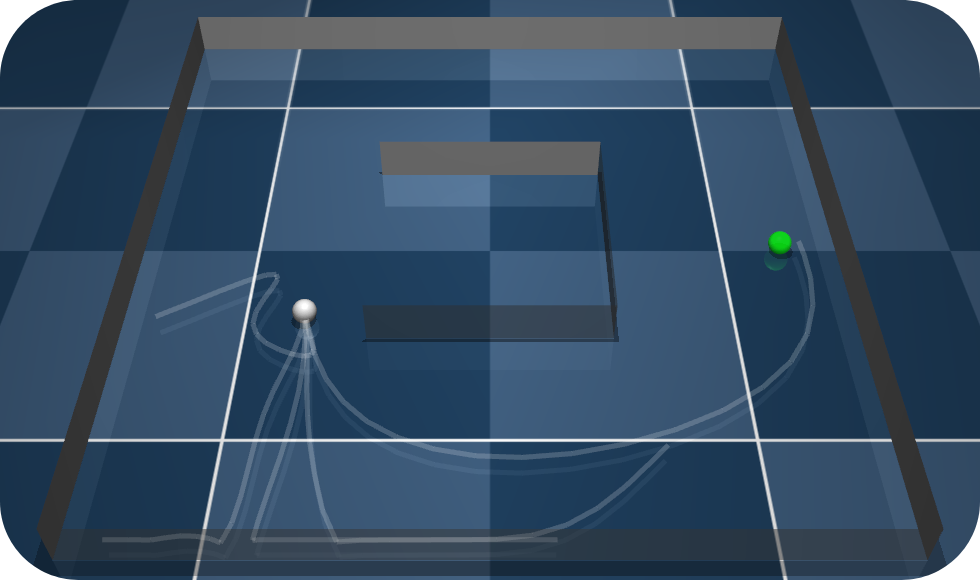}
  \end{minipage}
  \begin{minipage}[b]{0.25\linewidth}
  \centering
    \includegraphics[width=\linewidth]{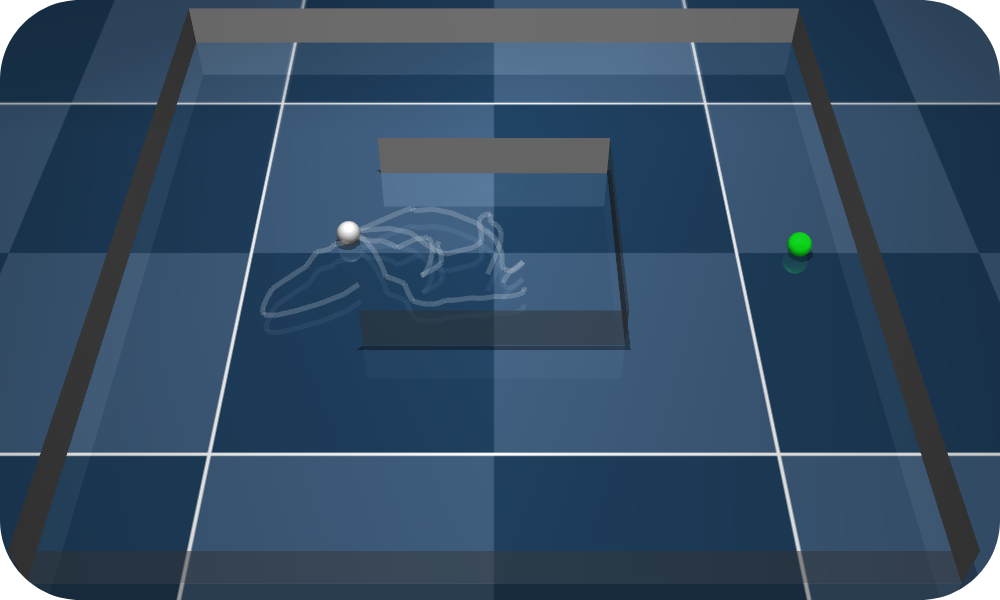}
  \end{minipage}
  \begin{minipage}[b]{0.45\linewidth}
  \centering
    \includegraphics[width=\linewidth]{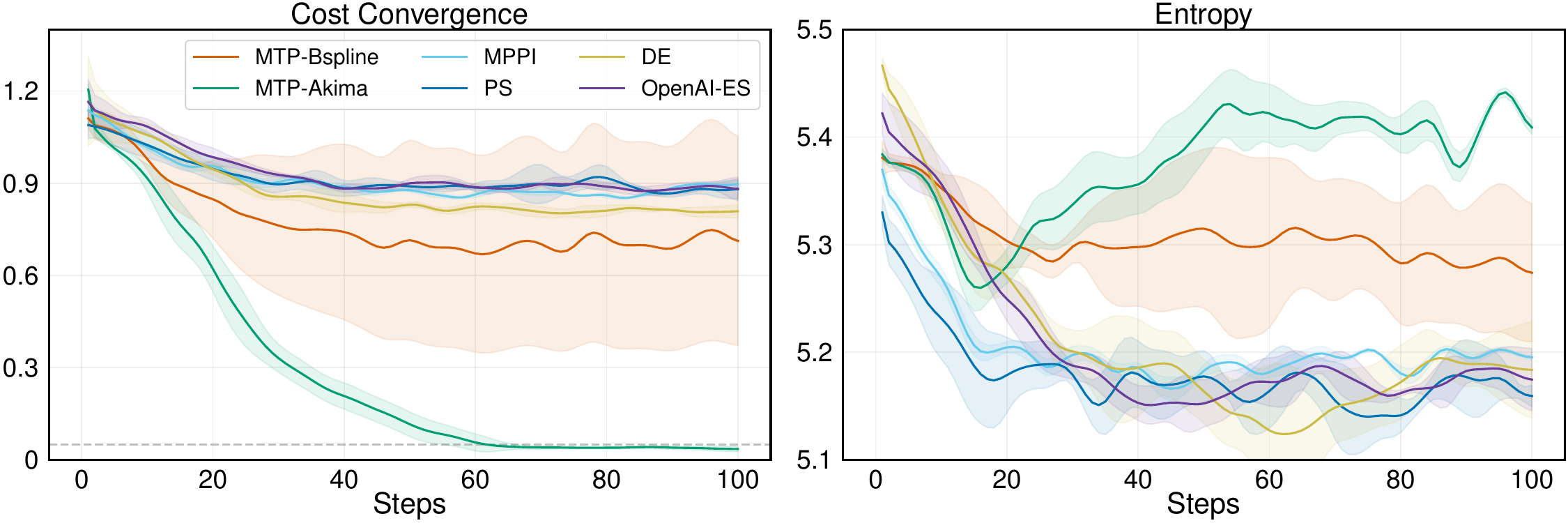}
  \end{minipage}
  \vspace{-0.3cm}
  \caption{Motivation comparison of MTP methods versus baselines with $B=256$ on \texttt{Navigation} environment. The environment is designed to be challenging to reach the green goal, requiring strong exploration to avoid large local minima in the middle (see task details in~\cref{subsec:env_details}). We plan with $T=20$ with $\Delta t = 0.05s$. The figures show $5$ random traces of white rollouts. (Left) MTP-Akima rollouts reach the green goal very early due to high-entropy tensor sampling, while (Right) OpenAI-ES struggles to generate a rollout exploring the way out of large local minima.}
\label{fig:particle}
\vspace{-0.5cm}
\end{figure}
\cref{fig:particle} also shows the cost convergence and the cost entropy curves over timesteps, in which the entropy is computed as $H = -\sum_{j=1}^B P_j \log P_j,\, P_j = \exp(J_j) / (\sum_l \exp(J_l))$, where $\{J_j\}_{j=1}^B$ is the batch of cumulative rollout costs. The entropy represents the diversity of rollout evaluation, implying the exploration capability of algorithms. We observe that MTP-Akima has the highest entropy curve, inducing the lowest cost convergence over timesteps, while other baselines converge to the middle minima. In this case, MTP-Bspline also struggles due to conservative interpolation in tensor sampling, achieving moderate entropy, yet higher than evolutionary strategies.

\subsection{Comparison Experiments}
\label{subsec:comp_exp}

We analyze the performance comparison of MTP and the baselines over various robotics tasks representing different dynamics and planning cost settings (cf.~\cref{fig:comparison_exp}). In each task, we tune the baselines and set the same white noise standard deviation for MTP/MPPI/PS to study the performance gain by tensor sampling, while using the default hyperparameters for evolutionary baselines.

\textbf{Comparison Environments.} \texttt{PushT}~\citep{chi2023diffusion}, \texttt{Cube-In-Hand}~\citep{andrychowicz2020learning} require intricate manipulation environments where robust exploration is critical due to complex contact dynamics and precise multi-step manipulation requirements. \texttt{G1-Standup, G1-Walk} represent high-dimensional robotic tasks demanding substantial computational resources and sophisticated control strategies. \texttt{Crane}, \texttt{Walker}~\citep{towers2024gymnasium} present underactuated and nonlinear dynamic challenges. To ensure fair comparison, for all baselines, we fix the same number of rollouts $B=16$ on \texttt{Crane}, and $B=128$ for all other tasks. All tasks are implemented in \texttt{hydrax}~\citep{kurtz2024hydrax}. Further experiment details are in~\cref{subsec:exp_details}.
\begin{figure}[t]
    \centering
    \includegraphics[width=\linewidth]{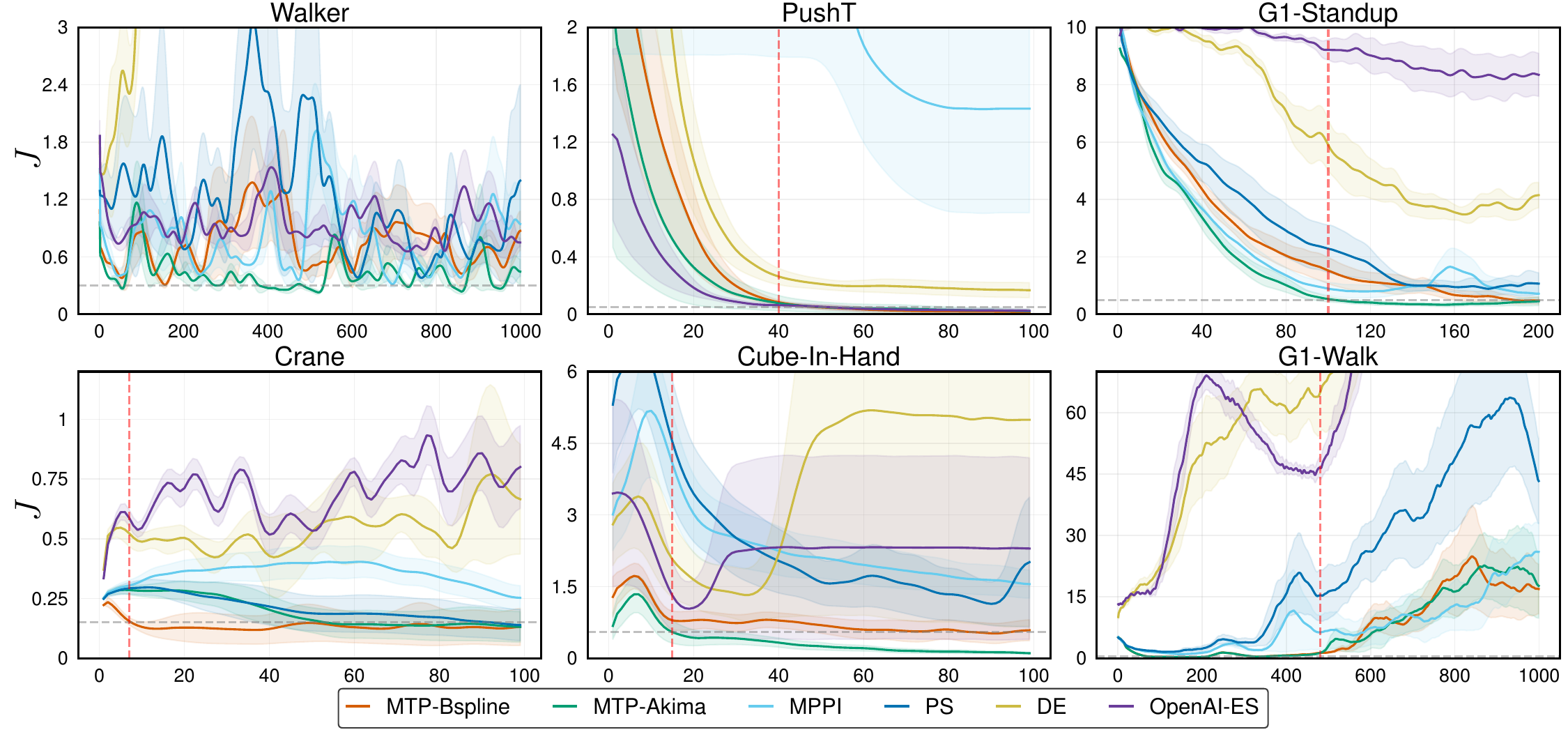}
    \vspace{-0.5cm}
    \caption{Performance comparison of MTP variants against standard MPC methods (MPPI, PS) and evolutionary algorithms (OpenAI-ES, DE). The (horizontal) gray dashed line depicts the task success, while the (vertical) red line represents the timestep such that the first algorithm statistically succeeds the task, or fails last in \texttt{G1-Walk}.}
    \label{fig:comparison_exp}
    \vspace{-0.5cm}
\end{figure}
The \texttt{PushT} task, which involves pushing a T-shaped object precisely to a target location, particularly highlights the advantage of MTP variants. While MPPI and PS frequently encounter mode collapse due to insufficient exploration, resulting in suboptimal or even failed attempts at solving the task, MTP-Bspline and MTP-Akima consistently achieve low-cost convergence. This underscores the significant benefit of strong exploration enabled by tensor sampling. Evolutionary algorithms like OpenAI-ES and DE perform similarly to MTP in \texttt{PushT}, inherently show better exploration than MPPI and PS, but still fall short compared to MTP variants in \texttt{Cube-In-Hand} due to noisy rollouts. \texttt{Cube-In-Hand} requires strong exploration while maintaining intricate control to avoid the cube falling, thus emphasizing the effectiveness of the MTP $\beta$-mixing strategy.

In the \texttt{Crane} environment, we apply heavy modeling errors of mass, inertia, and pulley/joint damping. MTP variants excel by maintaining stable control trajectories with smooth transitions, effectively navigating the nonlinear dynamics and underactuation. The B-spline interpolation's conservative nature helps avoid overshooting and instability prevalent in these tasks, thus outperforming both evolutionary algorithms and standard MPC methods that tend to produce erratic control inputs. The \texttt{Walker} task exhibits a rather simple dynamics model and is less sensitive to the sampling distribution due to its relatively simple contact model. We apply no modeling error as a sanity check. Indeed, MTP and classical MPPI/PS perform similarly in simple cases.

In \texttt{G1-Standup}, MTP-Akima demonstrates effective humanoid standup due to its aggressive yet smooth trajectory interpolation, enabling efficient exploration and rapid convergence. In contrast, OpenAI-ES and DE struggle with the dimensionality, often yielding higher cumulative costs and failing to adequately sample feasible trajectories, resulting in significant performance gaps. MTP variants have marginally higher performance than standard MPC baselines in \texttt{G1-Standup}, but show better control stability for longer \texttt{G1-Walk} before falling. These results underline the capability of MTP to balance exploration and exploitation in high-dimensional tasks systematically.

\subsection{Design Ablation}
\label{subsec:design}

We conduct an ablation study analyzing the effect of varying two crucial hyperparameters—the number of elites $E$ and the mixing rate $\beta$—on MTP algorithmic performance.~\cref{fig:design_choice} presents heatmaps indicating accumulated cost over time for each task and interpolation method (MTP-Linear, MTP-Bspline, MTP-Akima). Each heatmap illustrates distinct algorithmic realizations at its corners. Specifically, the bottom-left corner represents PS, characterized by a single elite and purely white noise sampling. Conversely, the bottom-right corner corresponds to Predictive Tensor Sampling (TS-PS), maintaining a single elite but employing full tensor sampling. Due to the \texttt{softmax} update (\cref{alg:mtp} Line 11-12), the top-left corner realizes MPPI (with adaptive sampling covariance), leveraging all candidate samples with local noise sampling, while the top-right corner reflects Tensor Sampling-MPPI (TS-MPPI), utilizing all candidates and full tensor sampling for maximum exploration. 
\begin{figure}[t]
    \centering
    \includegraphics[width=\linewidth]{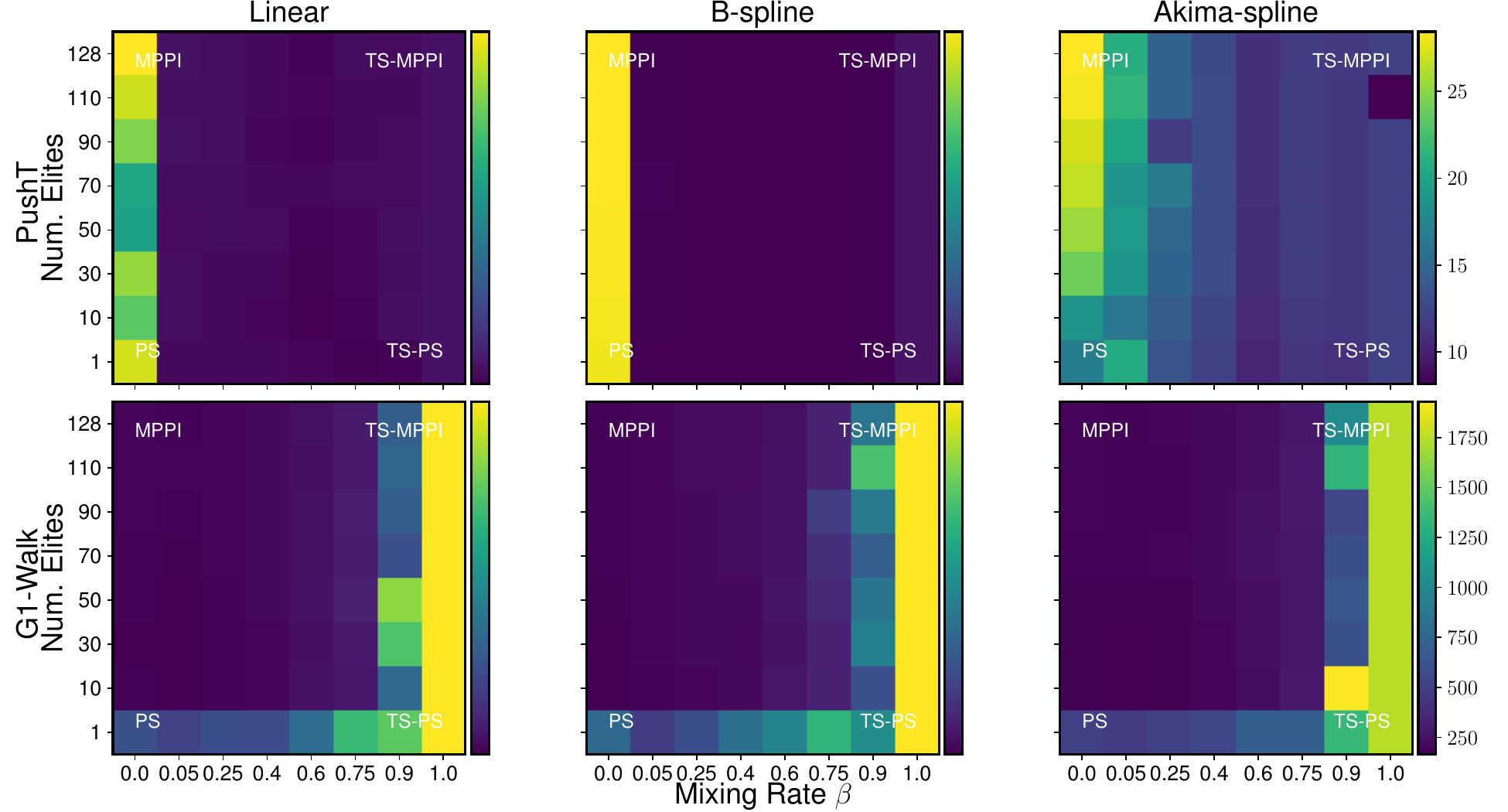}
    \vspace{-0.7cm}
    \caption{Mixing rate $\beta$ and number of elites $E$ sweep on \texttt{PushT, G1-Walk} tasks with $B=128$ to investigate the algorithmic update rule~\cref{alg:mtp} Line 9-12. The heatmap indicates accumulated cost over timesteps at termination, and the heat value range is fixed for each task/row.}
    \label{fig:design_choice}
    \vspace{-0.3cm}
\end{figure}

We observe the consistent pattern that MTP performance degrades at the extremes of $\beta$. In \texttt{PushT}, moderate values of $\beta$ lead to significantly lower costs, while the absence of tensor sampling ($\beta = 0$) yields poor performance due to inadequate exploration. This observation reinforces the effectiveness of the $\beta$-mixing strategy for balancing global and local sampling contributions. Interestingly, the number of elites $E$ has a limited impact in \texttt{PushT}, likely due to the task's insensitivity to control stability. However, in \texttt{G1-Walk}, the choice of $E$ is crucial. Using a single elite ($E=1$), corresponding to the PS control scheme, leads to unstable and jerky behavior, which aligns with the poor performance observed in~\cref{fig:comparison_exp}. At the other extreme, with $\beta=1$ (full tensor sampling), performance also degrades. This is attributed to the fixed rollout budget $B$; as $M$ increases, global samples become too sparse to effectively capture the fine-grained control required for stable gait tracking. In this case, exploitation with local samples is essential to maintain intricate motion tracking control.

\subsection{Sensitivity Ablation}
\label{subsec:ablations}

Here, we perform sensitivity analyses for various critical algorithmic hyperparameters.~\cref {fig:mixing_rate} evaluates sensitivity to the mixing rate $\beta$ for different tasks. For the \texttt{PushT} environment, results show minimal sensitivity across mixing rates, as the task inherently lacks significant failure modes, ensuring consistent success regardless of the exploration-exploitation balance. In contrast, the \texttt{Cube-In-Hand} task demonstrates high sensitivity, with larger mixing rates causing instability due to the cube falling out of grasp frequently. Optimal performance is thus achieved with lower $\beta$ values, suggesting careful management of exploration intensity. Furthermore, for the high-dimensional \texttt{G1-Standup} task, a smaller mixing rate helps stabilize the control, enabling the robot to achieve a more consistent and stable stand-up performance.
\begin{figure}
    \centering
    \includegraphics[width=\linewidth]{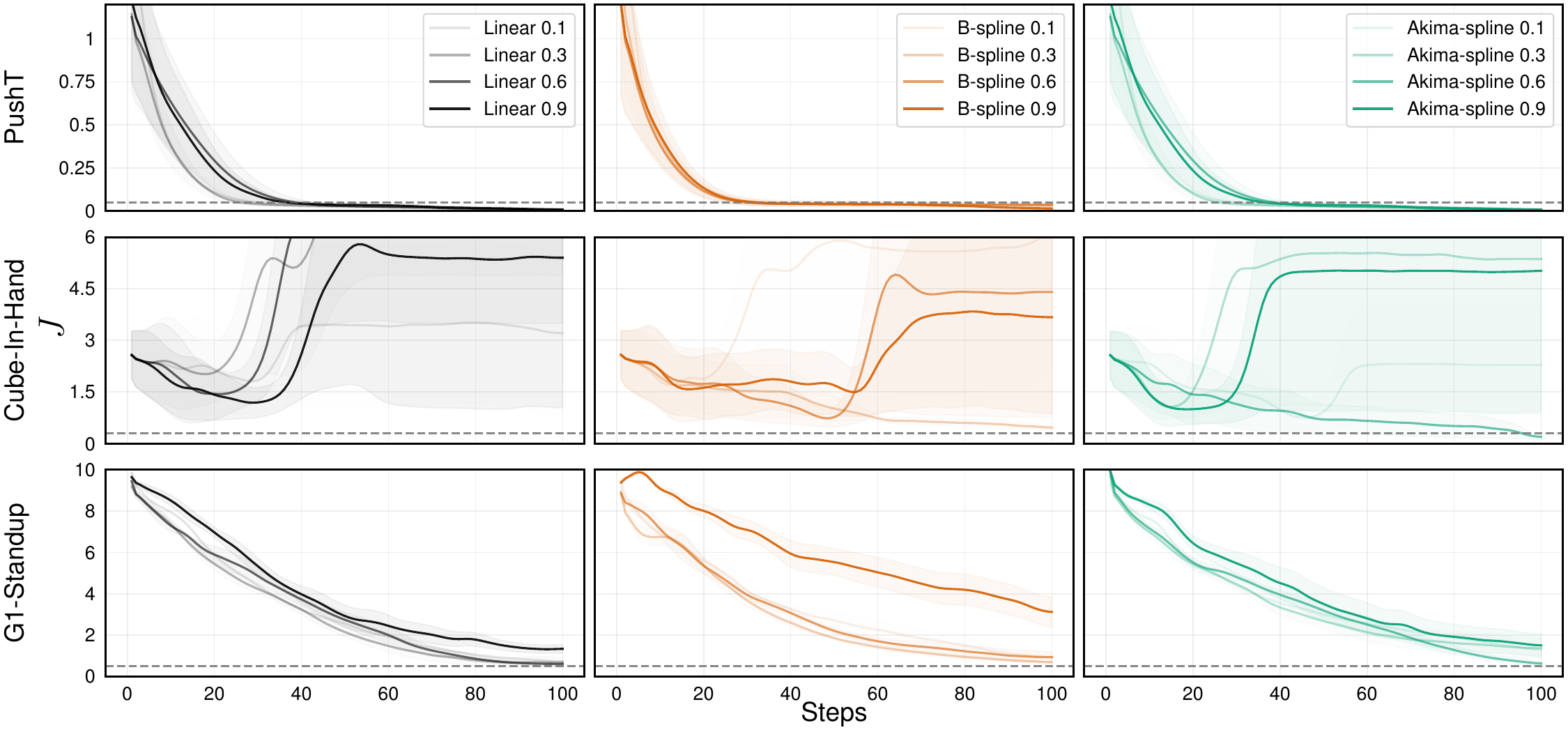}
    \vspace{-0.7cm}
    \caption{Mixing scalar $\beta$ sweep on \texttt{PushT, Cube-In-Hand, G1-Standup} environments with $B=128$ to investigate the sensitivity of MTP on explorative level. The dashed line represents the successful bar. In \texttt{Cube-In-Hand}, some of the cost curves increase due to the cube falling out of the LEAP hand.}
    \label{fig:mixing_rate}
    \vspace{-0.3cm}
\end{figure}


\section{Related Works}
\label{sec:related}

We review related efforts across two major directions: the vectorization of sampling-based MPC and sampling-based motion planning. While these approaches originate from different planning paradigms, dynamics-aware MPC versus collision-free geometric planning, they share a common structure: interacting with an agent-environment model (e.g., dynamics model or collision checker) that can be vectorized for efficient, batched computation. Both domains benefit from high-throughput sampling, making them increasingly amenable to modern GPUs/TPUs.

\textbf{Sampling-based MPC Vectorization.}
Sampling-based MPC~\citep{mayne2014model} has been successfully applied to high-dimensional, contact-rich control problems. Methods such as Predictive Sampling (PS)~\citep{howell2022predictive}, Model Predictive Path Integral (MPPI)~\citep{williams2017model, watson2023inferring}, and CEM-based MPC~\citep{pinneri2021sample} rely on parallel sampling of control trajectories and subsequent rollouts using a system dynamics model. These methods naturally benefit from vectorized simulation backends, and recent works have extended them toward more structured and efficient exploration. For instance, inspired by the diffusion process, DIAL-MPC~\citep{xue2024full} enhances exploration coverage and local refinement simultaneously, achieving high-precision quadruped locomotion and outperforming reinforcement learning policies~\citep{schulman2017proximalpolicyoptimizationalgorithms} in climbing tasks. STORM~\citep{bhardwaj2022storm} demonstrates GPU-accelerated joint-space MPC for robotic manipulators, achieving real-time performance while handling task-space and joint-space constraints. Other recent efforts integrate GPU-parallelizable simulators, such as IsaacGym~\citep{makoviychuk2021isaac}, into the MPC loop~\citep{pezzato2025sampling} for online domain randomization, removing the need for explicit modeling and enabling real-time contact-rich control. In another line, CoVO-MPC~\citep{yi2024covo} enhances convergence speed by optimizing the covariance matrix during sampling, leading to performance gains in both simulated and real-world quadrotor tasks. These advances demonstrate that structured, parallel control sampling can be effectively deployed in high-stakes robotics applications using vectorized dynamic models.

\textbf{Motion Planning Vectorization.}
Recent advances in sampling-based motion planning have demonstrated that classical methods, such as RRT~\citep{kuffner2000rrt}, can be significantly accelerated through parallel computation, while preserving theoretical guarantees like probabilistic completeness, which represents another form of maximum exploration. Early work focused on accelerating specific subroutines like collision checking~\citep{bialkowski2011massively, pan2012gpu}, but more recent efforts have restructured planners for full GPU-native execution. Examples include GMT*~\citep{lawson2020gpu}, VAMP~\citep{thomason2024motions}, pRRTC~\citep{huang2025prrtc}, and Kino-PAX~\citep{perrault2025kino}, which achieve millisecond-scale planning in high-dimensional configuration spaces by parallelizing sampling, forward kinematics, and tree expansions. GTMP~\citep{le2024global} pushes this even further by implementing the sampling, graph-building, and search pipeline as tensor operations over batch-planning instances, showcasing the feasibility of real-time planning across multiple environments.

Complementing sampling-based planning, trajectory optimization methods such as batch CHOMP~\citep{zucker2013chomp}, Stochastic-GPMP~\citep{urain2022learning}, cuRobo~\citep{sundaralingam2023curobo}, and MPOT~\citep{le2023accelerating} have embraced vectorization to solve hundreds of trajectory refinement problems in parallel. Many of these systems are further enhanced by high-entropy initialization with learned priors~\citep{carvalho2023motion, huang2024diffusionseeder, nguyen2025flowmp}, allowing them to overcome challenging nonconvexities in cluttered environments. These developments collectively demonstrate that both motion planning and MPC can be reformulated as batched, tensor-based pipelines suitable for modern accelerators.

Our work draws on these insights to propose a unified sampling-based control framework that operates entirely through tensorized computation, blending global exploration and local refinement in a single batched planning loop.

\section{Discussions and Conclusions}
\label{sec:conclusion}

In this work, we introduced \textit{Model Tensor Planning} (MTP), a robust sampling-based MPC approach designed to achieve global exploration via maximum entropy sampling. Theoretically, we demonstrated that in the limits of infinite layers $M$ and samples per layer $N$, our tensor sampling method attains maximum entropy, thereby efficiently approximating the full trajectory space. Furthermore, MTP is intentionally designed to be practically feasible, enabling straightforward implementation for sampling high-entropy control trajectories (see~\cref{subsec:additional_ablations}). 

While evolutionary strategies algorithms offer improved exploration capabilities~\citep{salimans2017evolution, zhang2024diffusion}, compared to traditional MPC methods, our experiments highlight their limitations.
The inherently noisy mutation processes often fail to achieve consistent high-entropy exploration, limiting their effectiveness in robotics tasks. In contrast, MTP’s tensor sampling consistently explores smooth control possibilities and achieves robust performance. 

Spline-based interpolations are central to the practical implementation of MTP, notably B-spline and Akima-spline.
These interpolation methods effectively address discontinuities in simple linear interpolation, ensuring the generation of smooth, continuous, and dynamically feasible control trajectories~\citep{alvarez2024real}.
The experiments underscore the splines' critical role in enhancing trajectory quality, optimizing performance across diverse, complex tasks.
We proposed a simple $\beta$-mixing strategy for exploration while retaining intricate controls, effectively balancing exploration and exploitation within sampling-based MPC.
This flexible strategy allows easy algorithm tuning to various tasks, significantly improving performance stability and robustness across environments with different exploration needs. 

From the vectorization standpoint, the matrix-based definition of MTP is specifically structured to leverage Just-in-time compilation \texttt{jit} and vectorized mapping \texttt{vmap} provided by JAX~\citep{jax2018github} and MuJoCo XLA~\citep{todorov2012mujoco}.
This design choice dramatically accelerates computations, enabling real-time implementation and seamless integration with online domain randomization with \texttt{vmap}, crucial for robust control.
Overall, MTP offers an efficient, scalable solution for various robotic tasks that demand high exploration capacity and precise control optimization.

\section*{Limitations}
While MTP demonstrates strong performance across diverse control tasks, it inherits several limitations typical of sampling-based methods. First, its computational cost scales with the number of rollouts, making it challenging to deploy on hardware with limited parallel computing. Second, while our tensor-based sampler improves exploration coverage, it does not leverage task-specific priors or learning-based proposal distributions, which could further improve sample efficiency. Finally, MTP relies on a fixed dynamics model, limiting its robustness in partially observed or stochastic environments.

\section*{Broader Impact Statement}

This work contributes to the development of efficient sampling-based control by introducing a scalable, high-entropy sampling mechanism for model predictive control (MPC). \textit{Model Tensor Planning} (MTP) opens a promising direction in the design of exploratory algorithms that go beyond local refinements, allowing for global reasoning over control spaces. By enabling maximum entropy exploration via structured tensor operations, MTP provides a framework that may benefit a wide range of decision-making systems requiring robust performance in underexplored, high-dimensional environments, such as dexterous manipulation, legged locomotion, or autonomous vehicles operating under partial observability and uncertainty.

From a real-world deployment perspective, MTP maintains controls within physical limits, but its high-rate, tensorized control sequences may induce rapid variations that practical motor systems must robustly execute. While this fast-changing control is beneficial for agility and responsiveness, it necessitates attention to actuator dynamics and hardware safety. Therefore, safety-aware control filtering or actuator-aware smoothness constraints may be incorporated as extensions for deployment.

Furthermore, like other MPC approaches, MTP assumes access to reliable state estimation for initializing planning rollouts in the control loop. In practical deployment, this typically requires a real-time state estimator and a simulation back-end that serves as a digital twin. For example, MuJoCo XLA can simulate hundreds of dynamics instances in parallel, making it suitable for real-time predictive control. However, realizing this in hardware introduces engineering challenges, such as ensuring low-latency communication between the physical robot and the simulator. We see this digital twin architecture as a promising frontier where algorithmic advances like MTP can be tightly integrated with system-level design for robust, real-time, and scalable autonomous control.






\bibliography{main, references}
\bibliographystyle{tmlr}

\newpage
\appendix

\section{Appendix}
\label{sec:appendix}

\subsection{Proof of Theorem 1} 
\label{subsec:prooft1}

Let $u \in \sF$ be any path that is uniformly continuous and has bounded variation $TV(u) < \infty$. We begin by constructing a piecewise linear path approximating $u$, $g_M: [0, 1] \to \sU$ by dividing the interval $[0, 1]$ into $M - 1$ subintervals, i.e., $[t_1, t_2], \ldots, [t_{M-1}, t_{M}]$ with $0 = t_1 < t_2 < \ldots < t_{M} = 1$. On each subinterval $[t_i, t_{i+1}]$, we define the corresponding segment of $g$ to approximate $u$
    \begin{equation} \label{eq:segment}
    \vg(t) = \vu(t_i) + \frac{\vu(t_{i+1}) - \vu(t_i)}{t_{i+1} - t_i} (t - t_i),\, t \in [t_i, t_{i+1}].
    \end{equation}

Then, we define a control path in $\gG(M,N)$.
\begin{definition}[Path In $\gG(M,N)$] \label{def:path_in_g}
    A path $u: [0, 1] \to \sU$ is in $\gG(M,N)$ (i.e., $u \in \gG(M,N)$) if and only if $u$ is piecewise linear having $M-1$ segments and $\forall 1 \leq i \leq M,\, \vu(t_i) \in L_i$.
\end{definition}
\begin{lemma}[Piecewise Linear Path Approximation] \label{lemma:path}
    Let $g_1, g_2$ be a piecewise linear function having the same number of partition points $\{\vg_1(t_i)\}_{i=1}^M,\{\vg_2(t_i)\}_{i=1}^M$ with $0 = t_1 < t_2 < \ldots < t_M = 1$, $\norm{g_1 - g_2}_{\infty} < \epsilon$, if and only if $\norm{\vg_1(t_i) - \vg_2(t_i)} < \epsilon,\, 1 \leq i \leq M$. 
\end{lemma}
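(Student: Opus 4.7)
The plan is to handle the two directions of the biconditional separately, since the forward direction is essentially trivial and the backward direction reduces to a one-line convexity observation on each sub-interval.

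For the forward direction ($\Rightarrow$), I would simply note that $\norm{g_1 - g_2}_\infty = \sup_{t \in [0,1]} \norm{\vg_1(t) - \vg_2(t)} < \epsilon$ implies in particular that $\norm{\vg_1(t_i) - \vg_2(t_i)} < \epsilon$ for every partition point $t_i$, which is exactly the claim.

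For the backward direction ($\Leftarrow$), assume $\norm{\vg_1(t_i) - \vg_2(t_i)} < \epsilon$ for all $1 \leq i \leq M$. The key observation is that on each sub-interval $[t_i, t_{i+1}]$, both $g_1$ and $g_2$ are affine (by the piecewise linearity assumption, together with the representation in \cref{eq:segment}), so their difference $h(t) := \vg_1(t) - \vg_2(t)$ is affine in $t$ on that interval. Since any vector norm $\norm{\cdot}$ is convex, the composition $t \mapsto \norm{h(t)}$ is a convex function on $[t_i, t_{i+1}]$, and therefore attains its supremum at one of the endpoints. Hence
\begin{equation}
    \sup_{t \in [t_i,t_{i+1}]} \norm{\vg_1(t) - \vg_2(t)} = \max\bigl\{\norm{\vg_1(t_i) - \vg_2(t_i)},\, \norm{\vg_1(t_{i+1}) - \vg_2(t_{i+1})}\bigr\} < \epsilon.
\end{equation}
Taking the maximum over all $M-1$ sub-intervals yields $\norm{g_1 - g_2}_\infty < \epsilon$.

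I do not anticipate any real obstacle here; this is a structural lemma whose only content is that affine interpolants are controlled pointwise by their values at the knots. The only subtlety worth being careful about is that one should justify the endpoint-maximum step via convexity of the norm (rather than, say, monotonicity), since $h(t)$ is vector-valued; once this is spelled out, the rest is bookkeeping. This lemma will then be used in the proof of \cref{thm:coverage} to reduce uniform closeness of $u$ to some $g \in \gG(M,N)$ to the much easier task of controlling distances at the knots $t_1,\ldots,t_M$.
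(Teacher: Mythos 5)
Your proof is correct and follows essentially the same route as the paper's: the forward direction is immediate from the definition of the sup-norm, and the backward direction bounds the difference on each sub-interval by its values at the endpoints. Your justification of the endpoint-maximum step via convexity of $t \mapsto \norm{\vg_1(t) - \vg_2(t)}$ (norm composed with an affine map) is in fact more explicit than the paper's, which simply asserts that the difference of two piecewise linear functions is maximized at the partition points.
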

\begin{proof}
    \textbf{Sufficiency.} Given $\norm{\vg_1(t_i) - \vg_2(t_i)} < \epsilon,\,\forall 1 \leq i \leq M$, since $g_1, g_2$ are piecewise linear functions, the linear interpolation between partition points $t_i,t_{i+1}$ ensures that the difference between $g_1,g_2$ is maximized at the partition points. Consider $g_1,g_2$ on a segment $[t_i, t_{i+1}]$
    \begin{equation}
        \norm{\vg_1(t) - \vg_2(t)} \leq \max \{ \norm{\vg_1(t_i) - \vg_2(t_i)},\,\norm{\vg_1(t_{i+1}) - \vg_2(t_{i+1})} \} < \epsilon
    \end{equation}
    Hence, $\norm{g_1 - g_2}_{\infty} = \max_{t \in [0, 1]} \norm{\vg_1(t) - \vg_2(t)} < \epsilon$.

    \textbf{Necessity.} Given $\norm{g_1 - g_2}_{\infty} < \epsilon$, then $\norm{\vg_1(t_i) - \vg_2(t_i)} < \epsilon,\, 1 \leq i \leq M$.
\end{proof}

Now, we investigate that any piecewise linear path $g$ with $M-1$ equal subintervals, approximating $u \in \sF$, uniformly converges to $u$ as $M \to \infty$.
\begin{lemma}[Convergence Of Linear Path Approximation] \label{lemma:linear_approx}
    Let $g_M$ be any piecewise linear path approximating $u \in \sF$ having $M$ equal subintervals of width $h = 1 / M$. Then,
    \[
        \lim_{M \to \infty} \|u - g_M\|_\infty = 0.
    \]
\end{lemma}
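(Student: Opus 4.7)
The plan is to prove this via a standard uniform continuity argument: since $u$ is uniformly continuous on the compact interval $[0,1]$, the difference $\|u(t) - g_M(t)\|$ at any point $t$ is controlled by the variation of $u$ over a subinterval of width $h = 1/M$, which shrinks to zero as $M \to \infty$. Notice that bounded variation is not actually needed for this particular lemma — only uniform continuity of $u$ on $[0,1]$ is used, though membership in $\sF$ provides both.

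First, I would fix an arbitrary $\epsilon > 0$ and appeal to uniform continuity of $u$ on $[0,1]$ to obtain $\delta > 0$ such that $\|\vu(t) - \vu(s)\| < \epsilon/2$ whenever $|t - s| < \delta$. Next, I would choose $M$ large enough that $h = 1/M < \delta$. For an arbitrary $t \in [0,1]$, there exists a unique index $i$ such that $t \in [t_i, t_{i+1}]$, where $t_{i+1} - t_i = h$.

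The key computation is to rewrite the segment definition in~\cref{eq:segment} as the convex combination
\begin{equation}
\vg_M(t) = \frac{t_{i+1} - t}{h}\, \vu(t_i) + \frac{t - t_i}{h}\, \vu(t_{i+1}),
\end{equation}
and then apply the triangle inequality,
\begin{equation}
\|\vu(t) - \vg_M(t)\| \leq \frac{t_{i+1} - t}{h} \|\vu(t) - \vu(t_i)\| + \frac{t - t_i}{h}\|\vu(t) - \vu(t_{i+1})\|.
\end{equation}
Since both $|t - t_i| \leq h < \delta$ and $|t_{i+1} - t| \leq h < \delta$, each norm on the right-hand side is bounded by $\epsilon/2$, and the convex weights sum to one, yielding $\|\vu(t) - \vg_M(t)\| < \epsilon/2$. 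Taking the supremum over $t \in [0,1]$ gives $\|u - g_M\|_\infty \leq \epsilon/2 < \epsilon$, proving $\lim_{M \to \infty} \|u - g_M\|_\infty = 0$.

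The argument has essentially no obstacle: the only subtlety is recognizing that the piecewise linear interpolant at any interior point is always a convex combination of two nearby values of $u$, so its error inherits the uniform continuity modulus of $u$ directly, without requiring the bounded variation hypothesis. This lemma will then be combined with~\cref{lemma:path} in the proof of~\cref{thm:coverage} to transfer approximation from the continuous path $u$ to a graph-realizable path in $\gG(M,N)$, where the remaining step handles the random placement of layer waypoints as $N \to \infty$.
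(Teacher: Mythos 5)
Your proof is correct and follows essentially the same route as the paper's: both arguments rely only on uniform continuity of $u$ on $[0,1]$, pick $M$ large enough that $h = 1/M < \delta$, and bound the pointwise error on each subinterval via the triangle inequality before taking the supremum. The only cosmetic difference is that you write the interpolant as a convex combination of $\vu(t_i)$ and $\vu(t_{i+1})$ (giving the slightly tighter bound $\epsilon/2$), whereas the paper decomposes the error as $(\vu(t)-\vu(t_i))$ plus the linear correction term and bounds each piece by $\epsilon/2$ separately; your observation that bounded variation is not actually needed here is also consistent with the paper's argument.
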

\begin{proof}
Since $u$ is uniformly continuous on $[0, 1]$, for any $\epsilon > 0$, there exists $\delta > 0$ such that for all $t, s \in [0, 1]$, if $|t - s| < \delta$, then
\begin{equation}
    \|\vu(t) - \vu(s)\| < \frac{\epsilon}{2}.
\end{equation}
Also, the variation of $u$ within each subinterval approaches zero as $M \to \infty$ due to the uniformly continuous property. Hence, for sufficiently large  $M$, each subinterval length $h = \frac{1}{M} < \delta$, and thus:
\begin{align}
\sup_{t \in [t_i, t_{i+1}]} \|\vu(t) - \vg_m(t)\| &=  \sup_{t \in [t_i, t_{i+1}]} \left\|\vu(t) - \vu(t_i) + \frac{\vu(t_{i+1}) - \vu(t_i)}{h} (t - t_i) \right\| \nonumber \\
&\leq \sup_{t \in [t_i, t_{i+1}]} \|\vu(t) - \vu(t_i)\| + \sup_{t \in [t_i, t_{i+1}]} \left\|\frac{\vu(t_{i+1}) - \vu(t_i)}{h} (t - t_i )\right\| < \frac{\epsilon}{2} + \frac{\epsilon}{2} = \epsilon.
\end{align}
Taking the supremum over all $t \in [0, 1]$ (i.e., over $M$ equal subintervals), we obtain:
\begin{equation}
\|u - g_M\|_\infty < \epsilon.
\end{equation}
Since $\epsilon > 0$ is arbitrary and $h \to 0$ as $M \to \infty$, it follows that:
\begin{equation}
\lim_{M \to \infty} \|u - g_M\|_\infty = 0.
\end{equation}
\end{proof}

We now prove that the random multipartite graph discretization is asymptotically dense in $\sF$. Specifically, as the number of layers $M$ and the number of samples per layer $N$ approach infinity, the graph will contain a path that uniformly approximates any continuous path in $\sF$.

\setcounter{theorem}{0}
\begin{theorem}[Asymptotic Path Coverage]
Let $u \in \sF$ be any control path and $\gG(M, N)$ be a random multipartite graph with $M$ layers and $N$ uniform samples per layer (cf.~\cref{def:graph}). Assuming a time sequence (i.e., knots) $0 = t_1 < t_2 < \ldots < t_M = 1$ with equal intervals, associating with layers $L_1,\ldots,L_M \in \gG(M, N)$ respectively, then
\[
\lim_{M,N \to \infty} \min_{g \in \gG(M,N)} \|u - g\|_\infty = 0.
\]
\end{theorem}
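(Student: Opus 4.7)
The strategy is a two-stage approximation combined via the triangle inequality: first deterministically approximate $u$ by an ideal piecewise-linear path $g_M^\star$ whose partition points coincide with $u(t_i)$, then show that the random graph $\gG(M,N)$ contains (with probability $\to 1$) a piecewise-linear path $g$ whose knot values lie within $\epsilon/2$ of $u(t_i)$ at each layer. Lemmas~\ref{lemma:path} and~\ref{lemma:linear_approx} already provide the deterministic machinery; the only new ingredient needed is a probabilistic covering argument for the layers.

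Concretely, fix $\epsilon > 0$. First, I would invoke \cref{lemma:linear_approx} applied to the piecewise linear interpolant $g_M^\star$ of $u$ at the equispaced knots $t_1,\ldots,t_M$, choosing $M$ large enough that $\|u - g_M^\star\|_\infty < \epsilon/2$. This step uses only uniform continuity of $u$ on $[0,1]$ and the construction in \cref{eq:segment}. Second, I would define the ``good event'' that for every $i \in \{1,\ldots,M\}$ the random layer $L_i$ contains at least one waypoint within distance $\epsilon/2$ of $\vu(t_i)$. Conditional on this event, pick one such $\vz_i \in L_i$ for each $i$, and let $g \in \gG(M,N)$ be the piecewise-linear path through $\vz_1,\ldots,\vz_M$ (which belongs to $\gG(M,N)$ by \cref{def:path_in_g}). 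By \cref{lemma:path}, $\|g - g_M^\star\|_\infty < \epsilon/2$, and the triangle inequality yields $\|u - g\|_\infty < \epsilon$, hence $\min_{g \in \gG(M,N)} \|u - g\|_\infty < \epsilon$.

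It remains to show the good event occurs with probability tending to $1$ as $N \to \infty$ (with $M$ fixed by the first step, then let $M \to \infty$ afterwards along $\epsilon \to 0$). Since $\sU \subset \sR^n$ is compact with positive Lebesgue measure, for each fixed $\vu(t_i)$ the ball $B(\vu(t_i), \epsilon/2) \cap \sU$ has some volume $v_i > 0$, so the probability that none of the $N$ i.i.d.\ uniform samples in $L_i$ lies in this ball is $(1 - v_i/\mathrm{vol}(\sU))^N$. A union bound across the $M$ layers gives
\begin{equation}
\Pr\!\left[\min_{g \in \gG(M,N)} \|u - g\|_\infty \geq \epsilon\right] \leq \sum_{i=1}^M \left(1 - \frac{v_i}{\mathrm{vol}(\sU)}\right)^N,
\end{equation}
which vanishes as $N \to \infty$ with $M$ held fixed. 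Interpreting the limit in the theorem as convergence in probability (or almost sure convergence via Borel--Cantelli along a suitable subsequence) then completes the argument.

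The only real obstacle is the quantifier order between $M$ and $N$ and the precise sense of ``$\lim$'' for the random quantity $\min_{g \in \gG(M,N)} \|u-g\|_\infty$: one must first choose $M$ to make the deterministic linear-interpolation error small, and then take $N$ large enough (as a function of $M$ and $\epsilon$) to make the covering probability close to one. The rest is a mechanical triangle-inequality combination of the two lemmas already in hand.
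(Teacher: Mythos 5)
Your proposal is correct and follows essentially the same route as the paper's proof: approximate $u$ by its equispaced piecewise-linear interpolant via \cref{lemma:linear_approx}, show each layer contains a waypoint in an $\epsilon$-ball around $\vu(t_i)$ with probability at least $1 - M(1-c)^N \to 1$, transfer this to a sup-norm bound via \cref{lemma:path}, and combine with the triangle inequality. If anything, you are more careful than the paper about the quantifier order ($M$ first, then $N$) and about the fact that the minimum over a random graph is a random variable whose limit must be read in probability or almost surely.
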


\begin{proof}
    
First, \cref{lemma:linear_approx} implies that there exists a sequence of linear piecewise $g_M$, having $M-1$ equal intervals approximating $u$, converging to $u$ as $M \to \infty$. 

Let $\hat{g}_M \in \gG(M,N)$ be a control path in $\gG$ (cf.~\cref{def:path_in_g}). Since the time sequence $0 = t_1 < t_2 < \ldots < t_M = 1$ corresponding to layers $L_1,\ldots,L_M \in \gG(M, N)$ has equal intervals, we can consider $\hat{g}_M$ having $M-1$ segments approximating $g_M$ without loss of generality.

Since $\sU$ is open, for each $i = 1, \ldots, M$, there exists a ball $B_\epsilon(\vu(t_i)) \subset \sU,\, \epsilon > 0$. By definition $\hat{g}_M, g_M$ has the same number of segments, the event $\norm{\hat{g}_M - g_M}_{\infty} < \eps$ is the event that, for each layer $1 \leq i \leq M$, there is at least one point $\vg_M(t_i)$ is sampled inside the ball $B_\epsilon(\hat{\vg}_M(t_i))$. The probability that none of the $N$ samples in layer $L_i$ fall inside $B_\epsilon(\vg_M(t_i))$ is
\begin{equation}
    \left(1 - \frac{\mu(B_\epsilon(\vu(t_i)))}{\mu(\sU)}\right)^N \leq e^{-cN}
\end{equation}
for some $c > 0$. From~\cref{lemma:path}, the probability that every layer contains at least one such sample, such that $\|g_M - \hat{g}_M\|_\infty < \epsilon$, is at least $1 - M e^{-cN}$, which converges to 1 as $N \to \infty$.

From~\cref{lemma:linear_approx}, for sufficiently large $M$, we have $ \|u - g_M\|_\infty < \eps$. Now, due to $\sU$ is compact, we can apply the triangle inequality as $N \to \infty$ 
\begin{equation}
\|u - \hat{g}_M\|_\infty \leq \|u - g_M\|_\infty + \|g_M - \hat{g}_M\|_\infty < \epsilon + \epsilon = 2\epsilon.
\end{equation}
Since $\epsilon$ was arbitrary, we conclude that
\begin{equation}
\lim_{M,N \to \infty} \min_{\hat{g}_M \in G(M,N)} \|u - \hat{g}_M\|_\infty = 0.
\end{equation}
\end{proof}

\subsection{Exploration Versus Exploitation Discussion}
\label{subsec:explore_vs_exploit}

We investigate the tensor sampling~\cref{def:graph} (with linear interpolation) versus MPPI sampling with horizon $T$, corresponding to global exploration versus local exploitation behaviors from the current system state. In particular, we remark on the entropy of path distributions in both methods in the discretized control setting with equal time intervals. Further investigation on the continuous control setting is left for future work.

Let a discrete control path be $\vtau = [\vu_1, \ldots, \vu_T] \in \sR^{T \times n}, \vu \in \sU$. Let $P(\vtau)$ be the probability of sampling control path $\vtau$ under a given planning method. The entropy is then defined as
\begin{equation}
    H(P) = - \sum_{\vtau \in \sF_T} P(\vtau) \log P(\vtau),
\end{equation}
where $\sF_T \subset \sF$ is the set of all possible discrete control paths $\vtau$ of length $T$.

Consider $\gG(M, N)$, each node in layer $L_i$ is sampled independently from a uniform distribution over $\sU$, and path candidates are equivalent to sequences of node indices $\vtau \sim \tau = (i_1, i_2, \dots, i_M) \in \{1, \dots, N\}^M$ (cf.~\cref{alg:sampling_paths}). Let $\gS$ denote the set of all index sequences representing valid paths through the graph. The uniform distribution over $\gS$ is given by $P_{\gG}(\vtau) = 1 / |\gS|,\text{ where } |\gS| = N^M$. Hence, the entropy of tensor sampling is 
\begin{equation}
H(P_{\gG}) = - \sum_{\tau \in \gS} (1/N^M) \log (1/N^M) = \log(N^M) = M \log N.
\end{equation}
Indeed, as $M, N \to \infty$, the entropy $H(P_{\gG}) \to \infty$, and the distribution over sampled paths in $\gG$ becomes maximum entropy over $\sF_T$ among all discrete path distributions.~\cref{thm:coverage} implies that $\sF_T \to \sF$ as $M, N \to \infty$, and thus tensor sampling distribution becomes maximum entropy over $\sF$.

Now, typical MPPI implementation generates control paths by perturbing a nominal trajectory $\Bar{\vtau}$ with Gaussian noise~\citep{vlahov2024mppi} $\vu_t = \Bar{\vu}_t + \vepsilon_t,\, \vepsilon_t \sim \mathcal{N}(\bm{0}, \mSigma)$, and propagating the dynamics to generate a state trajectory. The path distribution $P_{\mathrm{MPPI}}$ concentrates around $\Bar{\vtau}$ and is generally non-uniform. The entropy is constant and computed in closed form
\begin{equation}
H(P_{\mathrm{MPPI}}) = \frac{Tn}{2}(1 + \log(2\pi)) + T\log \det(\mSigma),
\end{equation}
due to independent Gaussian noise over timestep (i.e., white noise kernel~\citep{watson2023inferring}).

In general, tensor sampling serves as a configurable high-entropy sampling mechanism over control trajectory space, offering maximum exploration, while MPPI targets local improvement around a nominal trajectory, thereby performing exploitation. This distinction motivates the hybrid method, where we mix explorative (smooth) controls with local controls sampled from a typical white noise kernel.

\subsection{Task Details}
\label{subsec:env_details}

Here, we provide task details on the task, cost definitions, and their domain randomization. There exist motion capture sensors in MuJoCo to implement the tasks. For this paper, we deliberately design the task costs to be simple and set sufficiently short planning horizons to benchmark the exploratory capacity of algorithms. In practice, one may design dense guiding costs to make tasks easier.

\textbf{Navigation.}
A planar point mass moves in a bounded 2D space via velocity commands to reach a target while avoiding collisions. The state cost is defined as
\begin{equation}
c(\vx_t, \vu_t) = \alpha_1 \exp(-\lambda d_{\text{wall}}(\vx_t) + \alpha_2 \norm{\vx_t - \vx_g}^2 + \alpha_3 \norm{\vu_t}^2, \nonumber
\end{equation}
where $d_{\text{wall}}(\vx_t)$ is the distance to the closest wall, $\vx_g$ is the goal position, and $\vu_t$ is the velocity control. Success is defined when the agent’s distance to the target is sufficiently small. This task is extremely difficult for sampling-based MPC due to large local minima near the starting point.

\textbf{Crane.}
The agent controls a luffing crane via torque inputs to move a suspended payload to a target while minimizing oscillations. The cost function penalizes payload deviation and swing:
\begin{equation}
c(\vx_t, \vu_t) = \alpha_1 \norm{\vx_t - \vx_g}^2 + \alpha_2 \norm{\dot{\vx_t}}^2, \nonumber
\end{equation}
where $\vx_t$ is the payload tip position, $\vx_g$ is the target point, and $\dot{\vx_t}$ is the tip velocity. Success is achieved when the payload tip is within a small radius of the target location. This task is difficult due to heavy modeling error and underactuation, which is common in real crane applications.

\textbf{Cube-In-Hand.}
Using velocity control of a dexterous LEAP hand, the agent must rotate a cube to match a randomly sampled target orientation. The cost combines position and orientation error:
\begin{equation}
c(\vx_t, \vu_t) = \alpha_1 d_{\text{SE(3)}}(\vx_t, \vx_g)^2 + \alpha_2 \norm{\dot{\vx_t}}^2, \nonumber
\end{equation}
where $\vx_t$ and $\vx_g$ are the current cube and target poses, $d_{\text{SE(3)}}$ is the SE(3) distance metric between poses, and $\vu_t = \dot{\vx_t}$. Success is defined when the combined position and orientation errors fall below a threshold. This task is difficult due to the high-dimensional, contact-rich, and failure mode of the falling cube.

\textbf{G1-Walk.}
A Unitree G1 humanoid robot tracks a motion-captured walking trajectory using position control. The cost is defined as the deviation from reference joint positions:
\begin{equation}
c(\vx_t, \vu_t) = \alpha_1 \norm{\vx_t - \vx_{\textrm{ref}}(t + k)}^2, \nonumber
\end{equation}
where $\vx_t$ and $\vx_{\textrm{ref}}(t + k)$ are the current joint and reference joint positions, given the current control iteration $k$. Success is not binary but is measured by minimizing deviation from the reference joint configurations. Note that this cost is not designed for stable locomotion. The main challenge is maintaining motion tracking locomotion over long horizons with complex joint couplings.

\textbf{G1-Standup.}
The humanoid must rise from a lying pose to an upright standing posture. The cost penalizes deviation from upright pose and instability:
\begin{equation}
c(\vx_t, \vu_t) =  \alpha_1 (h_t - h^*)^2 + \alpha_2 d_{\text{SO(3)}}(\mR_{\text{torso}}, \mR_g)^2 + \alpha_3 \norm{\vq_t - \vq_{\textrm{nominal}}}^2, \nonumber
\end{equation}
where $h_t, h^*$ is the torso height and the standing height threshold, $\mR^{\text{torso}}_t, \mR_g$ are the orientation of current and target torso. $h_t, \mR^{\text{torso}}_t, \vq_t$ are elements of $\vx_t$. Success is defined when the height of the torso exceeds a target threshold. The task is difficult due to large initial instability and the need to achieve balance in high-dimensional dynamics.

\textbf{PushT.}
A position-controlled end effector to push a T-shaped block to a goal pose. The cost measures block pose error
\begin{equation}
c(\vx_t, \vu_t) = \alpha_1 d_{\text{SE(3)}}(\vx_t, \vx_g)^2, \nonumber
\end{equation}
where $\vx_t$ and $\vx_g$ are the current T-block and target poses. Success is achieved when the block’s position and orientation errors are minimized. The task is challenging because contact dynamics is complex, requiring precise interaction strategies.

\textbf{Walker.}
A planar biped must walk forward at a desired velocity while maintaining an upright torso. The cost function penalizes deviation from target velocity and orientation:
\begin{equation}
c(\vx_t, \vu_t) = \alpha_1 (h_t - h^*)^2 + \alpha_2 (\theta_t - \theta^*) + \alpha_3 (v_t - v^*)^2 + \alpha_4 \norm{\vu_t}^2, \nonumber
\end{equation}
where $h_t, h^*$ is the torso height and the standing height threshold, $v_t, v^*$ is the forward and target velocity, $\theta_t, \theta^*$ is the torso and target pitch angle. $h_t, \theta_t, v_t$ are elements of $\vx_t$. Success is measured by stable forward motion and velocity tracking. The difficulty lies in generating stable gaits without explicit foot placement planning.

Table~\ref{tab:env_summary} summarizes the environments used in our experiments, including their state and action space dimensions, control modalities, and whether domain randomization or task randomization was applied. We set the control horizon and sim step per plan such that they resemble realistic control settings.

\begin{table}[ht!]
\small
\addtolength{\tabcolsep}{-0.3em}
\centering
\caption{Summary of environment properties.}
\label{tab:env_summary}
\renewcommand{\arraystretch}{1.}
\begin{tabular}{lcccc}
\toprule
\textbf{Task} & \textbf{State Dim} & \textbf{Action Dim} & \textbf{Control Type} &  \textbf{Domain Randomization} \\
\midrule
\texttt{Navigation}         & 4    & 2   & Velocity    &  Joint obs. noise, actuation gain, init position \\
\texttt{Crane}              & 24   & 3   & Torque      &  Payload mass, inertia, joint damping, actuation gain \\
\texttt{Cube-In-Hand}       & 39   & 16  & Velocity    &  Joint obs. noise, geom friction \\
\texttt{G1-Walk}            & 142   & 29  & Position    &  Joint obs. noise, geom friction\\
\texttt{G1-Standup}         & 71   & 29  & Position    &  Joint obs. noise, geom friction \\
\texttt{PushT}              & 14   & 2   & Position    &  Geom friction, init pose \\
\texttt{Walker}             & 18   & 6   & Torque      &  None (fixed init) \\
\bottomrule
\end{tabular}
\end{table}

\subsection{Experiment Details \& Discussions}
\label{subsec:exp_details}

\textbf{Comparison Experiment.} We summarize the simulation settings for comparison experiments (cf.~\cref{subsec:comp_exp}) in~\cref{tab:sim_settings}, and the hyperparameters used for MTP variants in Table~\ref{tab:task_mtp_params}. Tables~\ref{tab:mppi_params} summarize the hyperparameters used for MPPI and PS (temperature is not relevant for PS). The same noise $\sigma$ is used for MTP local samples. For evolutionary strategies (DE and OpenAI-ES), we use default hyperparameters in \texttt{evosax}~\citep{lange2023evosax}.

\begin{table}[ht!]
\centering
\small
\caption{Simulation Settings for Experiments}
\label{tab:sim_settings}
\renewcommand{\arraystretch}{1.2}
\begin{tabular}{lccccc}
\toprule
\textbf{Task} & \textbf{Horizon $\Delta t$ [s]} & \textbf{Horizon} & \textbf{Sim Step/Plan} &  \textbf{Sim Hz} & \textbf{Num. Randomizations} \\
\midrule
\texttt{Navigation} &  0.05 & 20 & 2 & 100 & 8 \\
\texttt{Crane}     & 0.4  & 2 & 16 & 500  & 32 \\
\texttt{Cube-In-Hand} & 0.04 & 3 & 2 & 100 & 8  \\
\texttt{G1-Walk}   & 0.1 & 4 & 1 & 100 & 4 \\
\texttt{G1-Standup} & 0.2 & 3 & 1 & 100 & 4 \\
\texttt{PushT}       & 0.1 & 5 & 10 & 1000 & 4 \\
\texttt{Walker}     &  0.15 & 4 & 15 & 200 & 1\\
\bottomrule
\end{tabular}
\end{table}

\begin{table}[ht!]
\footnotesize
\centering
\begin{minipage}{0.55\linewidth}
\centering
\caption{MTP Hyperparameters}
\label{tab:task_mtp_params}
\renewcommand{\arraystretch}{1.2}
\begin{tabular}{lcccccc}
\toprule
\textbf{Task} & \textbf{M} & \textbf{N} & $\boldsymbol{\sigma_\text{min}}$ & \textbf{Elites} & $\boldsymbol{\beta}$ & $\boldsymbol{\alpha}$ \\
\midrule
\texttt{Navigation}     & 5 & 30  & - & - & 1.0  & -  \\
\texttt{Crane}        & 2 & 30  & 0.05 & 8   & 0.5  & 0.0  \\
\texttt{Cube-In-Hand} & 2 & 50  & 0.15 & 5   & 0.5  & 0.1 \\
\texttt{G1-Standup}   & 2 & 100  & 0.2  & 100 & 0.05 & 0.0  \\
\texttt{G1-Walk}      & 2 & 100 & 0.1  & 100 & 0.02 & 0.0  \\
\texttt{PushT}        & 3 & 50  & 0.1  & 20  & 0.5  & 0.0 \\
\texttt{Walker}       & 2 & 50   & 0.3  & 20  & 0.5  & 0.5 \\
\bottomrule
\end{tabular}
\end{minipage}
\hfill
\begin{minipage}{0.43\linewidth}
\centering
\caption{PS/MPPI Hyperparameters}
\label{tab:mppi_params}
\renewcommand{\arraystretch}{1.2}
\begin{tabular}{lcc}
\toprule
\textbf{Task} & \textbf{Noise Std. $\sigma$} & \textbf{Temperature} \\
\midrule
\texttt{Navigation}      & 1.0 & 0.1 \\
\texttt{Crane}            & 0.05 & 0.1 \\
\texttt{Cube-In-Hand}    & 0.15 & 0.1 \\
\texttt{G1-Standup}        & 0.2  & 0.1 \\
\texttt{G1-Walk}       & 0.1  & 0.01\\
\texttt{PushT}         & 0.3  & 0.1 \\
\texttt{Walker}     & 0.3  & 0.1 \\
\bottomrule
\end{tabular}
\end{minipage}
\end{table}

\textbf{Design Ablation.} We conducted the sweep on $\beta \in [0,1]$ and the number of elites to experimentally study the algorithmic design. The MTP hyperparameters in~\cref{subsec:design} are the same as in~\cref{tab:task_mtp_params}. Each setting was evaluated with $4$ random seeds.

\textbf{Sensitivity Ablation.} We conducted $\beta$-mixing rate ablation study (cf.~\cref{subsec:ablations}) by varying the $\beta$ across different MTP variants. The MTP  hyperparameters in~\cref{subsec:ablations} are the same as in~\cref{tab:task_mtp_params}. Each configuration was evaluated with $4$ random seeds.

\textbf{Experimental Discussions.} In tasks with well-shaped or dense reward structures and moderately nonlinear dynamics, exploration becomes less critical and nominal sampling methods (e.g., MPPI, PS) often suffice—explaining MPPI’s strong performance in \texttt{G1-Standup}, which benefits from fully actuated dynamics and informative rewards. However, in more challenging scenarios involving sparse rewards or highly nonlinear dynamics, such as \texttt{PushT} and under domain shifts in \texttt{Crane}, these locally guided strategies tend to struggle with inadequate exploration, often converging to suboptimal solutions or showing high performance variance. A representative case is the \texttt{Navigation} task (Figure 3), where the agent must discover velocity sequences to bypass obstacles and reach the goal—a setting in which local Gaussian sampling clearly fails by getting trapped in local minima. To address these challenges, MTP introduces a structured high-entropy sampling mechanism along with a simple yet effective $\beta$-mixing strategy that balances global exploration and local exploitation. With careful tuning of $\beta$, $M$, and $N$, MTP demonstrates robust and consistent performance across diverse tasks.

\textbf{Mixing Rate Tuning.} $\beta$ determines the ratio between exploratory (tensor sampling) and exploitative (nominal sampling) samples, which is delicate to tune. As shown in our ablations in~\cref{fig:mixing_rate} and~\cref{fig:design_choice}, high $\beta$ values (e.g., $0.5$) introduce strong exploration, which may benefit tasks with sparse rewards and not requiring delicate fixed-point stability (e.g., \texttt{PushT}, \texttt{Cube-In-Hand}, \texttt{Navigation}). Conversely, in tasks requiring high stability or precise actuation, such as \texttt{G1-Standup} or \texttt{G1-Walk}, lower $\beta$ values (e.g., $0.05$-$0.1$) tend to yield better performance by favoring consistent behavior while still injecting enough exploration to escape suboptimal solutions.

\subsection{Additional Ablation \& Performance Benchmarks}
\label{subsec:additional_ablations}

In this section, we conduct more MTP ablations to understand how hyperparameters affect MTP performance, and also briefly benchmark the baseline JAX implementations to confirm the real-time performance.

\textbf{B-spline Degree Ablation.} We investigate the sensitivity of MTP performance over B-spline interpolation degrees. The MTP  hyperparameters are the same as in~\cref{tab:task_mtp_params}. Each setting was evaluated using $4$ random seeds.

\begin{figure}
    \centering
    \includegraphics[width=\linewidth]{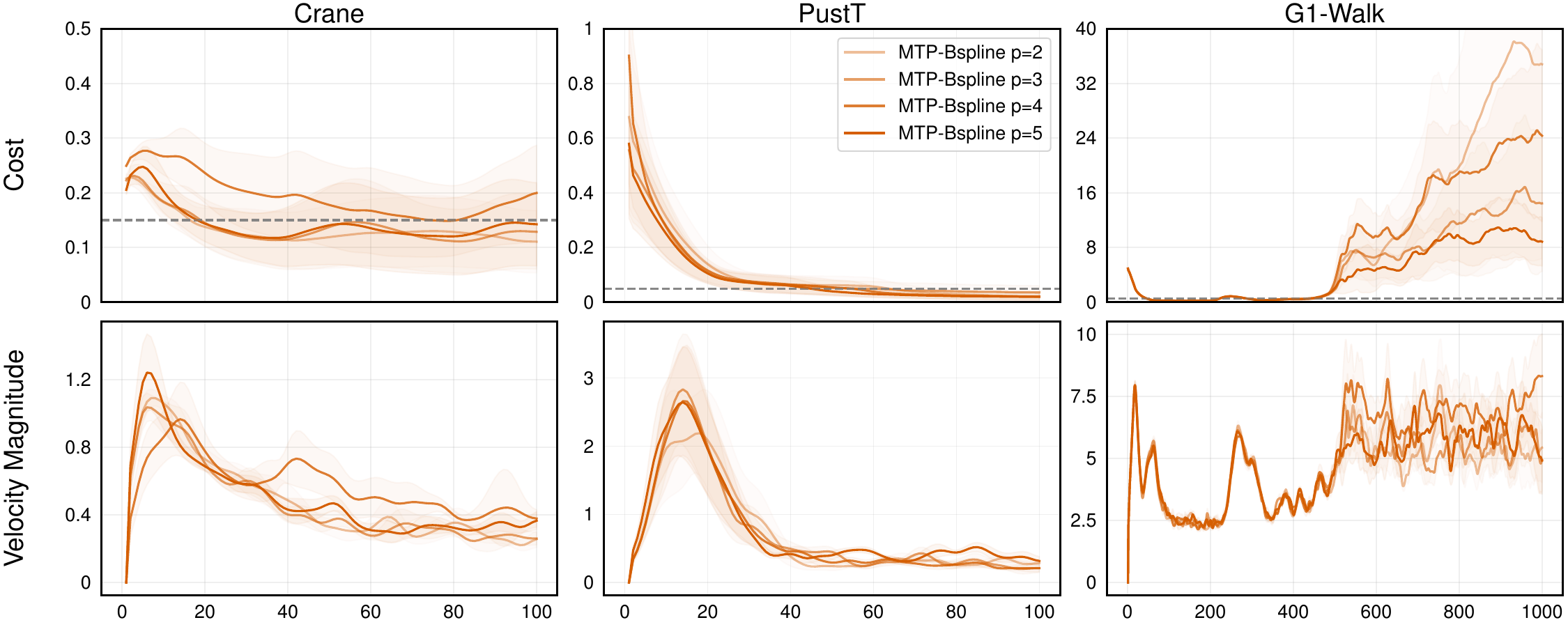}
    \vspace{-0.7cm}
    \caption{MTP-Bspline degree ablation. In \texttt{G1-Walk}, the Unitree G1 controlled Bspline degrees all roughly fall at $500$ time steps.}
    \label{fig:bspline_ablation}
    \vspace{-0.3cm}
\end{figure}
In~\cref{fig:bspline_ablation}, we investigate the sensitivity of MTP performance over B-spline interpolation degrees. Results consistently show minimal performance differences across degrees ($p=2$ to $p=5$) in terms of both cost and velocity magnitude curves across different tasks. Given this insensitivity, we select the lower computational complexity B-spline $p=2$ as our default choice for the MTP-Bspline method.

\textbf{Sweep $M, N$ Ablation.} We conducted an additional ablation to study the effect of varying $M$ and $N$ values of MTP variants on the \texttt{Navigation} task. We set $\beta=1$ to use full tensor sampling. Each configuration was evaluated with $4$ random seeds.
\begin{figure}[ht!]
    \centering
    \includegraphics[width=\linewidth]{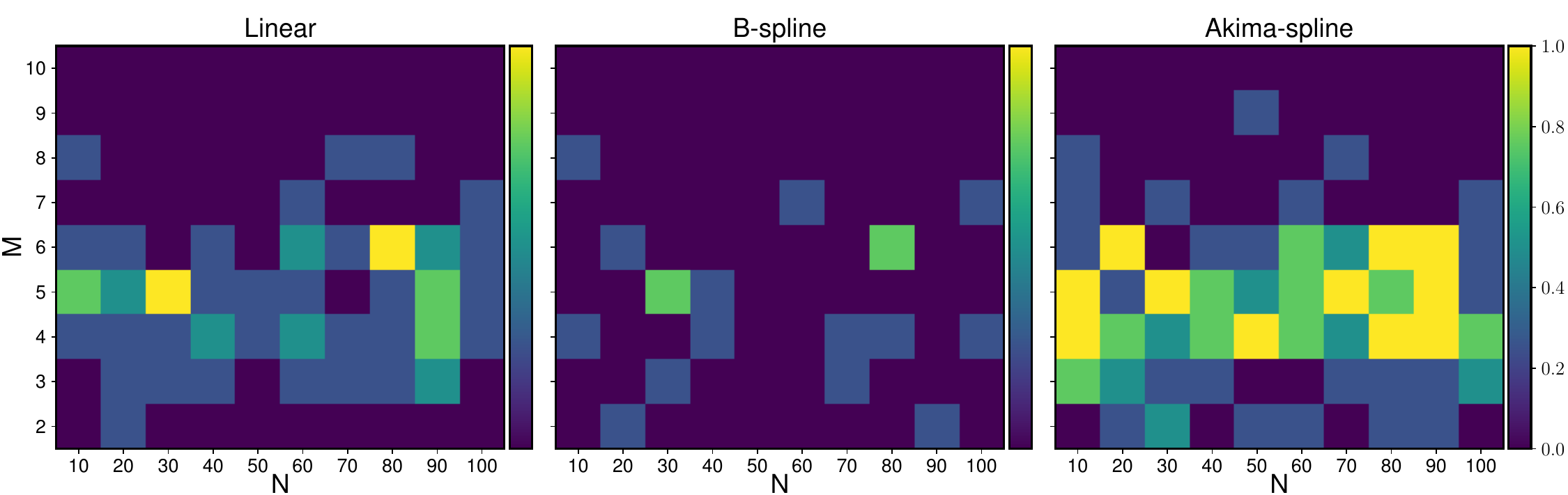}
    \vspace{-0.7cm}
    \caption{Sweep $M, N$ on \texttt{Navigation} environment with $B=256$ to investigate the interplay between number of batch sample $B$, number of layer $M$, and number of control-waypoints per layer $N$. Each data point is the success rate over $4$ seeds. The environment setting is as in~\cref{subsec:env_details}.}
    \label{fig:sweep_mn}
    \vspace{-0.3cm}
\end{figure}
According to~\cref{fig:sweep_mn}, there exists a sweet point in selecting the number of layers $M$ in tensor sampling. Roughly, for all MTP variants, increasing $M$ initially improves task performance, as more layers provide sufficient path complexity, allowing the planner to escape local minima and generate diverse, globally exploratory trajectories. However, beyond a certain point, further increasing $M$ degrades performance. This is due to the exponential growth in the number of possible paths $\gO(N^M)$, while the rollout budget $B$ remains fixed. As a result, the sampled trajectory density becomes sparse relative to the vast number of paths, reducing effective coverage and leading to diminished exploration and performance. On the other hand, increasing $N$ consistently improves performance by densifying the search at each layer. However, this comes at a higher computational cost. Therefore, a careful balance must be struck between $M$ and $N$ to maintain real-time control and effective control exploration.

\textbf{Planning Performance.} Table~\ref{tab:benchmark} benchmarks the performance of our JAX-based implementation by measuring the wall-clock time of the JIT-compiled planning function on \texttt{G1-Standup} task. This function includes a single sampling, single trajectory rollout, single cost evaluation, and single parameter update. Note that this benchmark only serves as an exemplary simulated planning function performance with JAX JIT. In practice, we might have multiple search refinements, or multiple parameter updates in the control loops. The task setting is similar to~\cref{tab:sim_settings}, but we set the sim step per plan to $1$ with $B=128$. The results show that the initial JIT compilation incurs a significant one-time cost, as expected for MuJoCo XLA pipelines. However, after compilation, the per-step planning rates across MTP, MPPI, PS, and evolutionary baselines are roughly similar with the same batch sample $B$, as also reflected in~\cref{tab:mtp_akima_timing_merged}, \cref{tab:mppi_timing_merged}, and \cref{tab:oes_timing_merged}. The results confirm that the JIT [s] and Planning Time [ms] are algorithm-agnostic, which depends slightly on batch size $B$ (i.e., Planning Time [ms] logarithmically increases with $B$) and on the environment dynamics. All algorithms remain real-time feasible on GPU-accelerated hardware, when implemented with JAX and MuJoCo XLA.

\begin{table}[ht!]
\centering
\caption{JAX implementation benchmark on \texttt{G1-Standup}, evaluated with $5$ seeds on an Nvidia RTX 3090.}
\label{tab:benchmark}
\renewcommand{\arraystretch}{1.2}
\addtolength{\tabcolsep}{-0.1em}
\begin{tabular}{lcccccc}
\toprule
 & MTP-Bspline & MTP-Akima & PS & MPPI & OpenAI-ES & DE \\
\midrule
JIT Time [s]      &  $76.4 \pm 1.2$  & $74.62 \pm 2.5$  & $72.35 \pm 4.5$ & $73.87 \pm 4.2$ & $69.87 \pm 1.2$  &  $73.62 \pm 3.1$\\
Planning Time [ms]  &  $2.7 \pm 0.3$ & $2.7 \pm 0.4$  & $3.1 \pm 0.7$ & $2.6 \pm 0.2$ & $2.9 \pm 0.5$ & $3.2 \pm 0.6$ \\
\bottomrule
\end{tabular}
\end{table}

\begin{table}[ht!]
\centering
\caption{Planning performance of \textbf{MTP-Akima}. Averaged over 5 seeds on an Nvidia RTX 4090.}
\label{tab:mtp_akima_timing_merged}
\renewcommand{\arraystretch}{1.2}
\addtolength{\tabcolsep}{-0.2em}
\begin{tabular}{ccccccccc}
\toprule
\multirow{2}{*}{Batch Size $B$}
& \multicolumn{4}{c}{\textbf{JIT Time [s]}} 
& \multicolumn{4}{c}{\textbf{Planning Time [ms]}} \\
\cmidrule(lr){2-5} \cmidrule(lr){6-9}
& \texttt{PushT} & \texttt{Crane} & \texttt{Cube-In-Hand} & \texttt{G1-Walk}
& \texttt{PushT} & \texttt{Crane} & \texttt{Cube-In-Hand} & \texttt{G1-Walk} \\
\midrule
$64$   & $15.8$  & $32.5$ & $38.2$ & $65.3$  & $1.7 \pm 0.1$ & $1.8 \pm 0.1$ & $8.1 \pm 0.3$ & $1.4 \pm 0.1$ \\
$128$  & $12.7$ & $31.5$ & $36.6$ & $58.5$  & $1.6 \pm 0.1$ & $1.9 \pm 0.1$ & $10.2 \pm 0.7$ & $1.5 \pm 0.1$ \\
$256$  & $16.3$ & $31.4$ & $38.8$ & $57.1$  & $2.0 \pm 0.2$ & $2.0 \pm 0.4$ & $14.7 \pm 0.8$ & $1.5 \pm 0.1$ \\
\bottomrule
\end{tabular}
\end{table}

\begin{table}[ht!]
\centering
\caption{Planning performance of \textbf{MPPI}. Averaged over 5 seeds on an Nvidia RTX 4090.}
\label{tab:mppi_timing_merged}
\renewcommand{\arraystretch}{1.2}
\addtolength{\tabcolsep}{-0.2em}
\begin{tabular}{ccccccccc}
\toprule
\multirow{2}{*}{Batch Size $B$}
& \multicolumn{4}{c}{\textbf{JIT Time [s]}} 
& \multicolumn{4}{c}{\textbf{Planning Time [ms]}} \\
\cmidrule(lr){2-5} \cmidrule(lr){6-9}
& \texttt{PushT} & \texttt{Crane} & \texttt{Cube-In-Hand} & \texttt{G1-Walk}
& \texttt{PushT} & \texttt{Crane} & \texttt{Cube-In-Hand} & \texttt{G1-Walk} \\
\midrule
$64$   & $14.9$ & $32.3$ & $37.7$ & $62.9$  & $1.7 \pm 0.1$ & $1.7 \pm 0.1$ & $8.2 \pm 0.3$ & $1.4 \pm 0.1$ \\
$128$  & $18.4$ & $29.9$ & $36.7$ & $58.2$  & $1.7 \pm 0.1$ & $1.8 \pm 0.1$ & $10.4 \pm 0.5$ & $1.4 \pm 0.1$ \\
$256$  & $14.9$ & $30.3$ & $37.5$ & $55.7$  & $1.9 \pm 0.1$ & $1.8 \pm 0.1$ & $15.2 \pm 0.9$ & $1.4 \pm 0.1$ \\
\bottomrule
\end{tabular}
\end{table}

\begin{table}[ht!]
\centering
\caption{Planning performance of \textbf{OpenAI-ES}. Averaged over 5 seeds on an Nvidia RTX 4090.}
\label{tab:oes_timing_merged}
\renewcommand{\arraystretch}{1.2}
\addtolength{\tabcolsep}{-0.2em}
\begin{tabular}{ccccccccc}
\toprule
\multirow{2}{*}{Batch Size $B$}
& \multicolumn{4}{c}{\textbf{JIT Time [s]}} 
& \multicolumn{4}{c}{\textbf{Planning Time [ms]}} \\
\cmidrule(lr){2-5} \cmidrule(lr){6-9}
& \texttt{PushT} & \texttt{Crane} & \texttt{Cube-In-Hand} & \texttt{G1-Walk}
& \texttt{PushT} & \texttt{Crane} & \texttt{Cube-In-Hand} & \texttt{G1-Walk} \\
\midrule
$64$   & $15.1$  & $32.1$ & $39.2$ & $61.5$  & $1.7 \pm 0.1$ & $1.8 \pm 0.1$ & $8.2 \pm 0.3$ & $1.6 \pm 0.1$ \\
$128$  & $19.0$ & $30.1$ & $38.4$ & $57.9$  & $1.7 \pm 0.1$ & $1.9 \pm 0.1$ & $10.3 \pm 0.4$ & $1.5 \pm 0.1$ \\
$256$  & $16.1$ & $30.9$ & $42.3$ & $55.1$  & $2.0 \pm 0.1$ & $1.8 \pm 0.1$ & $14.9 \pm 0.7$ & $1.5 \pm 0.1$ \\
\bottomrule
\end{tabular}
\end{table}

This demonstrates that MTP, despite its global exploration capabilities, remains suitable for real-time control applications. Our implementation benefits from efficient JIT and \texttt{vmap} vectorization in JAX and is compatible with MuJoCo's XLA backend. These design choices ensure that sampling, rollout, and learning components of MTP are fully optimized and scalable, and they support advanced techniques such as online domain randomization. Overall, the benchmark confirms the practicality of deploying MTP in high-performance robotic control loops.

\textbf{Softmax Update Effect.}~\cref{fig:softmax_ablation} illustrates the impact of applying \texttt{softmax} weighting on elite candidates for updating the mean and standard deviation of control trajectories. The left plot demonstrates smoother and lower-variance control updates over time compared to updates without \texttt{softmax} weighting shown on the right. The smooth and stable updates afforded by \texttt{softmax} weighting are essential when effectively mixing global and local samples, highlighting its critical role in the MTP performance.

\begin{figure}[htbp]
    \centering
    \includegraphics[scale=0.4]{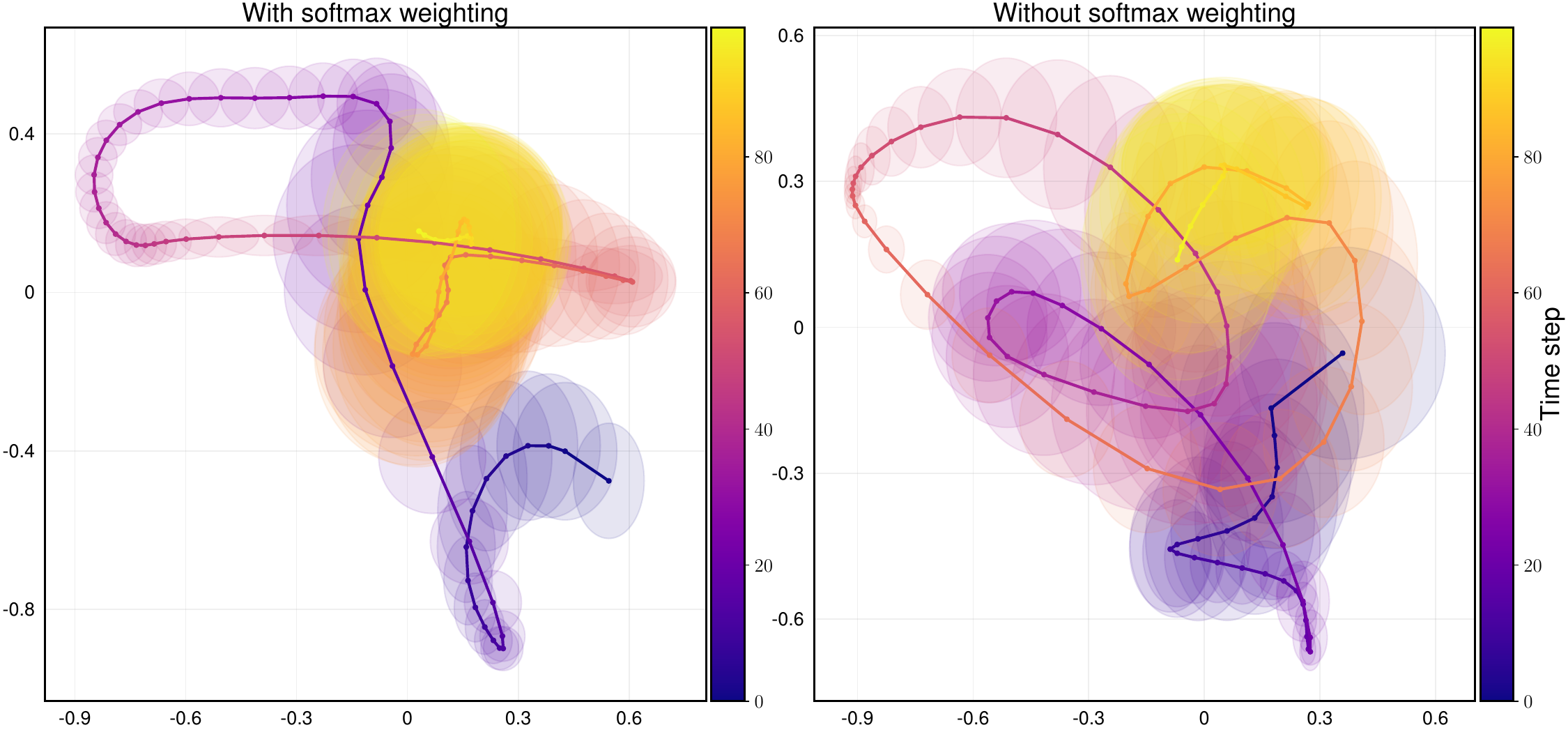}
    \vspace{-0.2cm}
    \caption{\texttt{Softmax} weighting ablation on \texttt{PushT} environment. Both control update trajectories converge to near-zero means with large variance at $100$ timesteps, signifying task completion.}
    \label{fig:softmax_ablation}
    \vspace{-0.3cm}
\end{figure}

\end{document}